\newtheorem{definition}{Definition}
\newtheorem{lemma}{Lemma}
\newtheorem{proposition}{Proposition}
\def\vec#1{\mbox{\boldmath $#1$}}
\def\mat#1{\mbox{\bf #1}}
\newcommand{\paratitle}[1]{\noindent {\bf #1}}
\title{Anchor Space Optimal Transport as a Fast Solution to Multiple Optimal Transport Problems}
\author{Jianming Huang \thanks{Graduate School of Fundamental Science and Engineering, WASEDA University, 3-4-1 Okubo, Shinjuku-ku, Tokyo 169-8555, Japan (e-mail: koukenmei@toki.waseda.jp) } \and Xun Su \thanks{Graduate School of Fundamental Science and Engineering, WASEDA University, 3-4-1 Okubo, Shinjuku-ku, Tokyo 169-8555, Japan (e-mail: suxun\_opt@asagi.waseda.jp)} \and Zhongxi Fang \thanks{Graduate School of Fundamental Science and Engineering, WASEDA University, 3-4-1 Okubo, Shinjuku-ku, Tokyo 169-8555, Japan (e-mail: fzx@akane.waseda.jp)} \and Hiroyuki Kasai \thanks{Department of Computer Science and Communication Engineering, WASEDA University, 3-4-1 Okubo, Shinjuku-ku, Tokyo 169-8555, Japan (e-mail: hiroyuki.kasai@waseda.jp)}}
\begin{document}

\maketitle

\begin{abstract}
In machine learning, Optimal Transport (OT) theory is extensively utilized to compare probability distributions across various applications, such as graph data represented by node distributions and image data represented by pixel distributions. In practical scenarios, it is often necessary to solve multiple OT problems. Traditionally, these problems are treated independently, with each OT problem being solved sequentially. However, the computational complexity required to solve a single OT problem is already substantial, making the resolution of multiple OT problems even more challenging. Although many applications of fast solutions to OT are based on the premise of a single OT problem with arbitrary distributions, few efforts handle such multiple OT problems with multiple distributions. Therefore, we propose the anchor space optimal transport (ASOT) problem: an approximate OT problem designed for multiple OT problems. This proposal stems from our finding that in many tasks the mass transport tends to be concentrated in a reduced space from the original feature space. By restricting the mass transport to a learned anchor point space, ASOT avoids pairwise instantiations of cost matrices for multiple OT problems and simplifies the problems by canceling {insignificant} transports. This simplification greatly reduces its computational costs. We then prove the upper bounds of its $1$-Wasserstein distance error between the proposed ASOT and the original OT problem under different conditions. Building upon this accomplishment, we propose three methods to learn anchor spaces {for reducing the approximation error}. Furthermore, our proposed methods\footnote{This paper is published in IEEE Transactions on Neural Networks and Learning Systems: \url{https://ieeexplore.ieee.org/document/10704726} \cite{huang2024anchor}} present great advantages for handling distributions of different sizes with GPU parallelization. The source code is available at \url{https://github.com/AkiraJM/ASOT}.
\end{abstract}

\section{Introduction}
As an important tool for comparing probability distributions, optimal transport (OT) theory \cite{Villani_2008_OTBook_s,Peyre_2019_OTBook_s} provides a robust and relevant distance for measuring two probability distributions on an already defined metric space: the so-called Wasserstein distance. However, because of the high computational complexity of $O(n^3\log(n))$ when solving a $n$-dimensional OT problem, it is difficult to apply it to practical machine learning tasks. In recent years, by virtue of computational improvements in solving an OT problem with entropic regularizers, computing an approximate Wasserstein distance with a negligible error is brought down to a quadratic time complexity. Therefore, many research works with applications of the OT theory showed up in many domains of machine learning \cite{arjovsky2017wasserstein,genevay2018learning}, which introduced a mini-batch OT loss to the generative models. In the graph learning domain, a framework of the Wasserstein kernel was proposed, which uses the Wasserstein distance to define distances or embeddings between graph-structured data \cite{togninalli2019wasserstein,kolouri2020wasserstein,Huang_SigPro_2020,Fang_AAAI_2023}. In the sequence learning domain, \cite{su2017order} modified the original OT problem to generate a solution with the characteristic of sequential data by adding temporal regularization terms. To compute solutions to OT problems more efficiently, many efforts have also been made. In this context, several related works include the well-known Sinkhorn solver \cite{Peyre_2019_OTBook_s}, which computes approximate solutions using entropic regularization. Additionally, the low-rank Sinkhorn \cite{scetbon2021low} solver calculates low-rank solutions for couplings based on Sinkhorn's solver. Moreover, the sliced Wasserstein distance \cite{bonneel2015sliced,kolouri2019generalized} presents a novel means of approximating the solution by solving one-dimensional OT problems on projections of data. {In addition to the original OT problem, which is limited to measures with equal total mass, a relaxed problem known as the unbalanced optimal transport (UOT) \cite{benamou2003numerical, Fukunaga_ICASSP_2022, Su_IJCNN_2024a} has been introduced for measures with differing total masses.}

Recently in machine learning tasks, one frequently encounters situations requiring the solution of multiple OT problems, as described in {\bf Definition \ref{Def.MOTS}} of Section \ref{Sec.Pre}. For example, in a graph classification based on the Wasserstein graph kernel \cite{togninalli2019wasserstein}, one can compute a pairwise Wasserstein graph distance matrix to instantiate a precomputed kernel for the support vector machine. The pairwise Wasserstein graph distance matrix includes distances of every pair of graphs of the dataset, each of which is computed by solving a single OT problem. Other similar examples include the $k$-NN sequence classification task based on the order-preserving Wasserstein distance \cite{su2017order}, {and the Wasserstein $k$-means clustering tasks \cite{Fukunaga_ICPR_2020}}. 
The previously described related works, as well as many other relevant works, are usually based on the premise of a single OT problem with two arbitrary distributions, without consideration of a scenario in which multiple OT problems with a given set of distributions are solved. In this case, as shown in the left part of Figure \ref{Fig.overall}, conventional one-by-one computation must solve each single OT problem separately {with their own cost matrices ($\mat{C}$ in Eq. (\ref{Eq.OT}))}. This mode of computation requires us to instantiate a cost matrix for every distribution pair, which is inefficient.

When considering these difficulties described above, it is apparent that in many machine learning tasks, the samples tend to be clustered in the feature space. Therefore, {the transportation of mass over some feature pairs is more important than others in the context of multiple OT problems. Therefore, we believe that transporting mass over a partial feature space is a more efficient approach for approximating solutions to multiple OT problems}. More concretely, let $\mathcal{X}$ be the feature space. For data that are not uniformly distributed in $\mathcal{X}$, the mass transport tends to concentrate in a reduced inner product space {$\mathcal{W}\times\mathcal{W}$} instead of $\mathcal{X}\times\mathcal{X}$, where {$\mathcal{W}$} is a subset of $\mathcal{X}$. Therefore, we {consider this $\mathcal{W}$} as a so-called {\it anchor space} and restrict the mass transport of the original OT problems to {$\mathcal{W}\times\mathcal{W}$}. Then we transform the original OT problem into {another} {\it approximate} OT problem with a set of {\it anchor points} {that can be considered as a discrete representation of {$\mathcal{W}$}. We name that transformed problem the {\it anchor space optimal transport} (ASOT) problem. As shown in Figure \ref{Fig.overall}, {we only design a single anchor space $\mathcal{W}$ for multiple OT problems so that all the transformed ASOT problems can share it}. In this manner, we need not instantiate the cost matrices for each, which helps to reduce computational time and memory space. Furthermore, with an anchor space excluding the {insignificant pairs}, we can further save time and memory at the expense of some approximate accuracy. We prove the upper bound of {the} absolute $1$-Wasserstein distance error of our proposed ASOT problem to the original OT problem. {In order to design an ASOT problem with a low $1$-Wasserstein distance error,} we propose a metric learning framework (ASOT-ML) to learn the anchor space {using this theoretical upper bound}. However, we found that ASOT-ML has several limitations which make it difficult to use under some conditions. Therefore, we further propose a lightweight version based on $k$-means clustering (ASOT-$k$) and a deep dictionary learning version (ASOT-DL).

Also, we find another attractive advantage of our proposed ASOT problem: Its convenience for parallel processing of multiple OT problems with size-variable discrete distributions. The Sinkhorn solver \cite{Peyre_2019_OTBook_s} provides a GPU-parallelizable solution to accelerate the OT batch processing. However, GPU parallelization with Sinkhorn's solver is only applicable to distributions with fixed sizes. It fails when dealing with {size-variable} data represented as discrete distributions of varying sizes, such as graph data, because the size-variable data cannot be stacked along the new batch dimensions. Although this difficulty can be resolved by an improved Sinkhorn solver with a block-diagonal stacking strategy \cite{fey2019fast} as described in Section \ref{Sec.Pre}, it still requires extra operations for block-diagonal stacking operations. Most importantly, it is unable to avoid the huge time and space costs imposed by the cost matrix instantiation. In our proposed ASOT, because the distributions of the transformed OT problems always have a fixed size, irrespective of whether they were originally size-variable, it can be directly applied to the {GPU-parallelized} Sinkhorn solver without the need {for} the block-diagonal stacking. 

Although similar works are related to OT problems with anchor spaces or subspaces \cite{lin2021making, paty2019subspace}, these specifically examine enhancing OT's robustness rather than considering multiple OT problems. For example, the latent OT \cite{lin2021making} divides the OT problem into three subproblems of OT: source space to source anchor space, source anchor space to target anchor space, and target anchor space to target space, which costs much greater computational time than OT. Conversely, our proposed ASOT problem includes only one OT problem on the anchor space. For that reason, it is solvable much faster.

Our contributions are threefold:
(i) We propose the anchor space optimal transport (ASOT) problem, which transforms multiple OT problems into approximate OT problems, benefiting from the lack of a need for pairwise cost instantiation and the always fixed distribution size. With the proposed ASOT problem, the computational cost of multiple OT problem solutions is greatly reduced. Furthermore, in the case of discrete distributions with different sizes, our proposed ASOT is solvable much faster with GPU parallelization.
(ii) We theoretically derive and prove the upper bound of absolute $1$-Wasserstein distance error between our ASOT and OT, which makes our approximation errors predictable. Moreover, to {design ASOT problems with low approximation error to the original problem}, we propose three methods for anchor space learning (ASOT-ML, ASOT-$k$, and ASOT-DL) for dealing with different situations.
(iii) We evaluate distance approximation errors and computational time by conducting numerical experiments on several widely used real-world datasets. 
These results demonstrate that our proposed approaches achieve a great reduction of computational time with reasonable approximation performances.

We divide the remainder of this paper into the following sections: (i) Section \ref{Sec.Pre} introduces the mathematical notation used and some important related works. (ii) Section \ref{Sec.SOT} introduces the definition and properties of our proposed ASOT problem. 
This section also introduces the ASOT batch processing method and its related complexity analysis. (iii) Section \ref{Sec.DeepDicLearn} introduces three methods for learning anchor spaces, which are used to construct the ASOT problem which minimizes the absolute $1$-Wasserstein distance error. (iv) Section \ref{Sec.exp} presents the setup and results of our evaluation experiment. 

\begin{figure}
\centering
\includegraphics[width= \hsize]{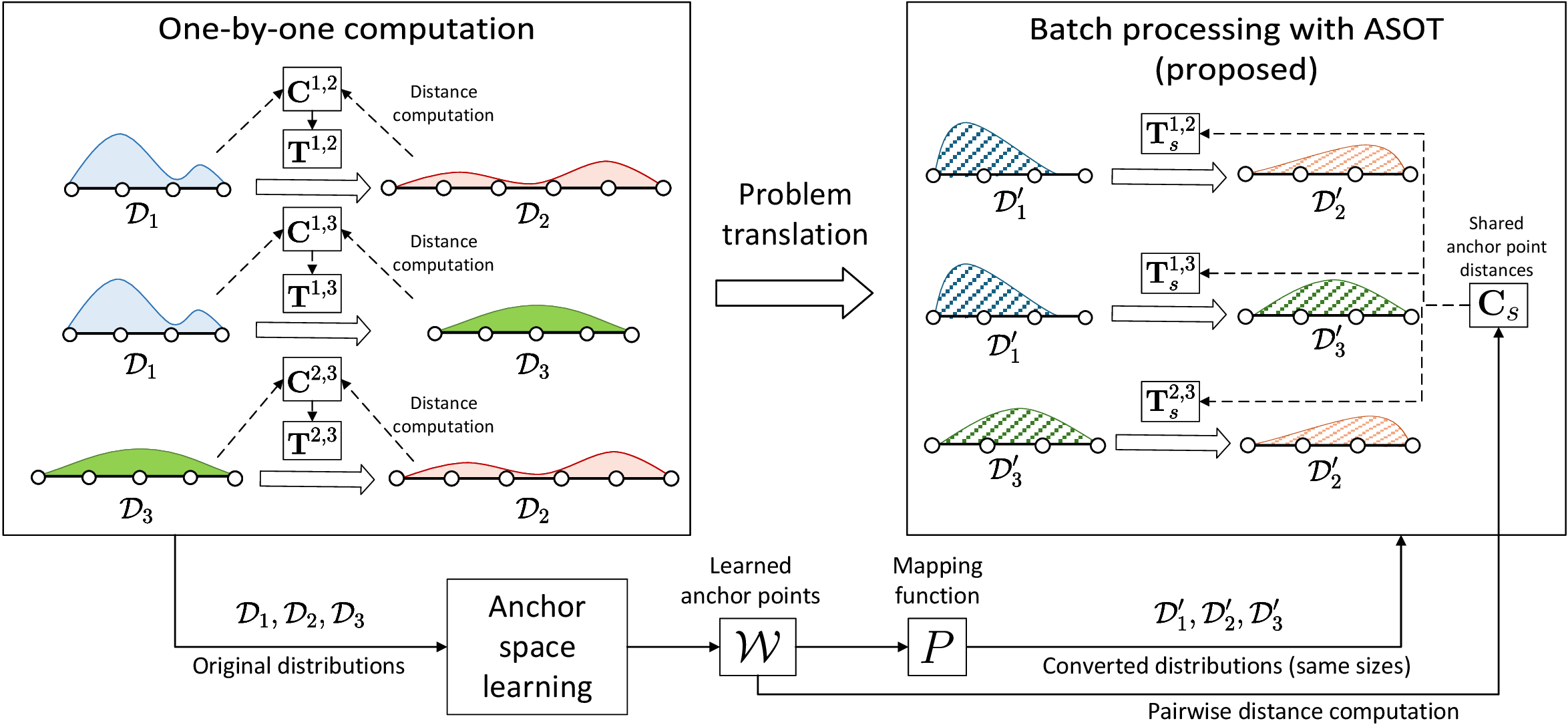}
\caption{{Illustration of the two strategies used to solve multiple OT problems: one-by-one computation {with the original OT problem} (left) and batch processing with the proposed ASOT problem (right). Assuming a multiple OT problem set $\mathcal{M}(\{\mathcal{D}_1, \mathcal{D}_2, \mathcal{D}_3\},\{(1,2),(1,3),(2,3)\})$ (refer to {\bf Definition \ref{Def.MOTS}}), then let $\mat{T}^{i,j}$ denote the transportation plan from $\mathcal{D}_i$ to $\mathcal{D}_j$. To solve the multiple problems, conventional one-by-one computation needs instantiation of the cost matrices ($\mat{C}^{1,2},\mat{C}^{1,3},\mat{C}^{2,3}$) for each distribution pair. Our proposed method learns an anchor space for all distributions. Thereafter, the original distributions are converted to new ones on} the learned anchor space as $\mathcal{D}_1^\prime, \mathcal{D}_2^\prime, \mathcal{D}_3^\prime$, which is a probability simplex based on a learnable anchor point set $\mathcal{W}$. When conducting batch processing with ASOT, all converted OT problems share the same cost matrix $\mat{C}_s$, which is a pairwise distance matrix of the anchor points in $\mathcal{W}$. Therefore, there is no need to instantiate cost matrices for each distribution pair.}
\label{Fig.overall}
\end{figure}

\section{Preliminaries}
\label{Sec.Pre}
Herein, we use lowercase letters to represent scalars such as $a,b$, and $c$. Lowercase letters with bold typeface such as $\vec{a}, \vec{b},$ {and} $\vec{c}$ represent vectors. Uppercase letters with bold typeface such as $\mat{A}, \mat{B},$ {and} $\mat{C}$ represent the matrices. For slicing matrices, we represent the element of the $i$-th row and $j$-th columns of $\mat{A}$ as $\mat{A}(i,j)$. Furthermore, $\mat{A}(i,:)$ denotes the vector of the $i$-th row of $\mat{A}$. Similarly, $\mat{A}(:,j)$ signifies the vector of $j$-th column. Also, $\vec{a}(i)$ denotes the $i$-th element of the vector $\vec{a}$.
Uppercase letters with an italic typeface denote a set, such as $\mathcal{A},\mathcal{B}, \mathcal{C}$. Also, $|\mathcal{A}|$ denotes the size of $\mathcal{A}$. $\mathbb{R}$ stands for the real number set. $\mathbb{R}_+$ represents a non-negative real number set. $\vec{1}_n$ denotes an all-one vector. $\langle\mat{A},\mat{B}\rangle := \sum_{i,j}\mat{A}(i,j)\mat{B}(i,j)$ signifies the Frobenius dot product. $\Delta^p_{n}:=\{\vec{a} \in \mathbb{R}^n : \vec{a}(i) \geq 0, \sum_{i=1}^n \vec{a}(i)=1\}$ denotes the $n$-dimensional probability simplex. Also, $\Delta^u_{n}:=\{\vec{a} \in \mathbb{R}^n : \vec{a}(i) \geq 0, \sum_{i=1}^n \vec{a}(i) \leq 1\}$ denotes the $n$-dimensional unit simplex. {$\llbracket N\rrbracket$} expresses the integer set $\{1,2,\cdots,N\}$. By default, we use element-wise division to divide vectors and use the element-wise exponential function $\exp$. {We use $\|\vec{a}\|_p$ to represent the $p$-norms of vector $\vec{a}$. In addition, $\|\vec{a}\|_\infty$ denotes the supremum norm of vector $\vec{a}$.} {All matrix norms are identical to those in the vector space, e.g., $\| \mat{A}\|_1=\|{\rm vec}(\mat{A})\|_1$, where ${\rm vec}(\mat{A}):\mathbb{R}^{m \times n} \rightarrow \mathbb{R}^{mn}$ represents a vectorization operator of a matrix to a column vector.} The following paragraphs outline the foundational knowledge pertinent to our research.

\paratitle{Optimal transport (OT) problem}. The OT problem \cite{Villani_2008_OTBook_s,Peyre_2019_OTBook_s} is intended to find a {transport} plan between two probability measures defined on a given metric space. For the main topic discussed herein, we only consider the OT problem based on discrete probability distributions, which could be viewed as finite sets of sampling points with feature vectors. Assume that two discrete probability distributions on different simplexes are given as $\vec{a}\in\Delta_{n}^p$ and $\vec{b}\in\Delta_{m}^p$. Furthermore, assume that the sampling points of $\vec{a}$ {and} $\vec{b}$ are on the same metric space, where the distance between the sampling points can be measured using a given distance metric. Let $\mat{C}\in\mathbb{R}^{n\times m}$ be the ground cost matrix, where $\mat{C}(i,j)$ denotes the distance measured between the $i$-th sampling point of $\vec{a}$ and $j$-th one of $\vec{b}$. The OT problem between $\vec{a}$ and $\vec{b}$ is defined as
\begin{equation}
	\mathop{\rm minimize}_{{\bf T}\in\mathcal{U}(\bm{a}, \bm{b})}~\langle\mat{T}, \mat{C}\rangle,\label{Eq.OT}
\end{equation}
where $\mathcal{U}(\vec{a}, \vec{b}) := \{\mat{T}\in\mathbb{R}_{+}^{n\times m}:\mat{T}\vec{1}_m = \vec{a}, \mat{T}^T\vec{1}_n = \vec{b}\}$ represents the set of all transportation plans between $\vec{a}$ and $\vec{b}$. The OT problem derives a new distance measuring the optimal transportation cost, designated as the Wasserstein distance. The $p$-Wasserstein distance between two discrete probability distributions $\vec{a},\vec{b}$ is defined as $W_p(\vec{a}, \vec{b},\mat{C}) := \mathop{\rm minimize}_{\bf{T}\in\mathcal{U}(\bm{a}, \bm{b})}~\langle\mat{T}, \mat{C}^p\rangle^\frac1p$. A wide application of the Wasserstein distance in specific machine learning tasks is to measure the $1$-Wasserstein distance between two sample sets. Given two sets of $d$-dimensional samples $\mathcal{X} := \{\vec{x}_i\}_{i=1}^{n}$ and $\mathcal{Y} := \{\vec{y}_j\}_{j=1}^{m}$ with $|\mathcal{X}| = n, |\mathcal{Y}| = m, \vec{x}_i,\vec{y}_j\in\mathbb{R}^d$ and their probability (mass) vector $\vec{a}\in\Delta_{n}^p$ and $\vec{b}\in\Delta_{m}^p$, respectively, then let $\mathcal{D}_x := (\mathcal{X}, \vec{a})$ and $\mathcal{D}_y := (\mathcal{Y}, \vec{b})$ denote the two distributions based on these two sample sets. To measure the optimal {transport} cost between {$\mathcal{D}_x$ and $\mathcal{D}_y$}, then the $1$-Wasserstein distance between {$\mathcal{D}_x$ and $\mathcal{D}_y$} is
\begin{equation}
	W_1(\mathcal{D}_x, \mathcal{D}_y,d_{\rm S}) := \mathop{\rm minimize}_{{\bf T}\in\mathcal{U}\left(\bm{a}, \bm{b}\right)}~\langle\mat{T}, \mat{C}\rangle,\label{Eq.Wass_app}
\end{equation}	
where $d_{\rm S}:\mathbb{R}^d\times\mathbb{R}^d\to \mathbb{R}_{+}$ represents the given ground cost metric
$\mat{C}(i,j) := d_{\rm S}(\vec{x}_i,\vec{y}_j)$. It is noteworthy that, because there is usually no prior information about the probability distributions that are already known in a practical situation, there is a common way to set the probability vectors as uniform distributions as $\vec{a} = \frac{\bm{1}_n}{n}, \vec{b} = \frac{\bm{1}_m}{m}$.

\paratitle{Sinkhorn's algorithm}. Because the OT problem expressed in Eq. (\ref{Eq.OT}) was able to be represented in the canonical form of a linear programming problem, which is a convex optimization problem, it is solvable in the complexity of $O(n^3\log(n))$ in the case of $n = m$. Another more popular scheme for solving OT problems is Sinkhorn's algorithm. Adding an entropic regularization term $H(\mat{T}) := -\sum_{i,j}\mat{T}(i,j)(\log(\mat{T}(i,j)) - 1)$ to Eq. (\ref{Eq.OT}) produces an entropically regularized OT (eOT) problem as
\begin{equation*}
	W^e_1(\mathcal{D}_x, \mathcal{D}_y):=\mathop{\rm minimize}_{{\bf T}\in\mathcal{U}(\bm{a}, \bm{b})}~\langle\mat{T}, \mat{C}\rangle - \varepsilon H(\mat{T}),
	\label{Eq.entro_Wass_app}
\end{equation*}
where $\varepsilon>0$ is a regularization parameter. This is solvable by introducing Sinkhorn's fixed-point iterations in the complexity of $O(n^2\log(n)\tau^{-3})$ with setting of $\varepsilon = \frac{4\log(n)}{\tau}$ \cite{Sinkhorn_PJM_1967,Cuturi_2013_NIPS,Benamou_2015_SIAMJSC_s,altschuler2017near}. In Sinkhorn's algorithm, the {transport} plan can be decomposed as $\mat{T} = \vec{u}^T\mat{K}{\vec{v}}$, where $\mat{K}:=\exp (-\frac{{\bf C}}\varepsilon )\in\mathbb{R}^{n\times m}$ is a Gibbs kernel computed from $\mat{C}$ and where $\vec{u}\in\mathbb{R}^n_{+}, \vec{v}\in\mathbb{R}^m_+$ are two scaling variables to be updated. Then the updating equations of the $j$-th iteration of Sinkhorn's algorithm are computed as $\vec{u}^{(j+1)} = \frac{{\bm a}}{{\bf K}{\bm v}^{(j)}}$ and ${\bm v}^{(j+1)} = \frac{{\bm b}}{{\bf K}^T{\bm u}^{(j+1)}}$.

\paratitle{Multiple optimal transport problem set}. We present our definition of the multiple OT problem set, which is the main application scenario of the proposal.
\begin{definition}[Multiple optimal transport problem set]
\label{Def.MOTS}
Assume a discrete distribution set of size $N$ as $\mathbb{D} := \{\mathcal{D}_1, \cdots, \mathcal{D}_N\}$. Given a pair set $\mathcal{P} \subset \{(i,j):i,j\in{\llbracket N\rrbracket},i\neq j\}$, the multiple OT problem set is defined as
\begin{equation}
\mathcal{M}(\mathbb{D},\mathcal{P}) := \left\{W_1(\mathcal{D}_i, \mathcal{D}_j, d_{\rm S}): (i,j)\in\mathcal{P}; \mathcal{D}_i,\mathcal{D}_j\in\mathbb{D}\right\},\label{Eq.MOTS}
\end{equation}
where $d_{\rm S}:\mathbb{R}^d\times\mathbb{R}^d\to\mathbb{R}_+$ represents the ground cost metric function. $d$ stands for the dimensions of each sample.
\end{definition}
\paratitle{Approaches to solve multiple OT problems}. {Solving problems of the multiple OT problem set can be viewed as solving OT problems of the given distribution pairs $\mathcal{P}$ over $\mathbb{D}$}. For cases in which distributions in $\mathbb{D}$ are of the same size, such problems are solvable in parallel using the GPU-parallelizable {Sinkhorn's algorithm}. When distribution sizes are allowed to vary, we cannot use the GPU-parallelizable {Sinkhorn's algorithm} directly and should instead use the block-diagonal stacking strategy.
Under the condition of size-fixed distributions, the updating equations of Sinkhorn's algorithm can be batched easily with stacking $\vec{u}, \vec{v}, \vec{a}, \vec{b}$ and $\mat{K}$ along a new batch dimension to create a volume. Although this strategy is inapplicable to size-variable distributions, we can adopt a block-diagonal stacking strategy where the kernel matrices $\mat{K}$ are stacked along the diagonal, while other vectors are stacked vertically instead of creating a new dimension. Thereafter, substituting the batched matrices and vectors into the original equation obtains a matrix of batched solutions. Such batch processing is also efficiently GPU-parallelizable with the release of dedicated CUDA kernels \cite{fey2019fast}.

\section{Anchor Space Optimal Transport (ASOT)}
\label{Sec.SOT}
This section introduces the definition of our proposed ASOT problem. We then present the theorem of an upper bound of absolute $1$-Wasserstein distance error between ASOT and normal OT, {followed by the complexity analysis and the entropic ASOT for GPU parallelization.}

\subsection{Motivation and definitions}
As described in the preceding sections, conventional one-by-one computation has two weak points that we particularly examine: (i) the need for cost matrices instantiation for every distribution pair and (ii) the need for extra operations to parallelize with size-variable distributions. To resolve these two difficulties, we approximately convert the multiple OT problems to problems on a reduced space based on several anchor sampling points. To remove the cost matrix instantiation described in (i), we design a so-called ASOT problem to approximate the original OT problem. It can be regarded as a new one with a low-rank representation of the original cost matrix. The most important point is that it has a fixed and shareable component ({\bf Lemma \ref{Lem.1}}). For the difficulty of (ii), in our proposed ASOT problem, mass transport is restricted to a learned anchor point set, which has a fixed size equal to the number of anchor points, thereby it does not require the operation of block-diagonal stacking. In summary, we propose to learn a so-called anchor space, over which our defined ASOT problems can be solved efficiently. Moreover, it has an upper bounded absolute $1$-Wasserstein distance error to the original OT problem. {Prior to defining the ASOT problem, we first introduce the concept of anchor space and a supporting simplex lemma. These foundational elements are essential for the formal definition of the ASOT problem}.

\begin{definition}[Anchor space]
\label{Def.AS}
{
Assume an original metric space on $\mathbb{R}^d$ with a metric function $d_{\rm S}:\mathbb{R}^d\times\mathbb{R}^d\to\mathbb{R}_+$, denoted as $S := (\mathbb{R}^d, d_{\rm S})$. Let $\mathcal{W}:=\{\vec{w}_i\}_{i=1}^k$ be a set of $k$ points on $S(\mathbb{R}^d, d_{\rm S})$, where $\vec{w}_i \in \mathbb{R}^d$. Given a new metric function $d_{\rm AS}:\mathbb{R}^d\times\mathbb{R}^d\to\mathbb{R}_+$, we define $A := (\mathcal{W}, d_{\rm AS})$ as an anchor space of $S$.
}
\end{definition}

\begin{lemma}[Simplex lemma]
\label{Lem.Sim}
{
Let $\{\vec{u}_i\}_{i=1}^n$ be a set of $n$ $d$-dimensional probability simplex vectors, where $\vec{u}_i\in\Delta^p_d$, and let $\vec{v}\in\Delta^p_n$ be an $n$-dimensional simplex vector. Define a new vector $\vec{w} := \sum_{i=1}^n \vec{u}_i \cdot \vec{v}(i)$. Then $\vec{w}\in\Delta^p_d$.
}
\end{lemma}
\begin{proof}
{
To prove that $\vec{w} \in \Delta^p_d$, we first establish that $\vec{w} \in \mathbb{R}^d_+$. This is evident because $\vec{w}(j) = \sum_{i=1}^n \vec{u}_i(j) \cdot \vec{v}(i)$, where $\vec{u}_i(j) \geq 0$ and $\vec{v}(i) \geq 0$. Let $\mat{U} \in \mathbb{R}^{n \times d}$ be the concatenated matrix of $\{\vec{u}_i\}_{i=1}^n$, with $\mat{U}(i,:) = \vec{u}_i$. Thus, we obtain $\vec{w} = \mat{U}^T \vec{v}$. Consequently, we have $\vec{w}^T \vec{1}_d = \vec{v}^T \mat{U} \vec{1}_d = \vec{v}^T \vec{1}_n = 1$. From these results, it follows that $\vec{w}(j) \geq 0$ and $\sum_{j=1}^d \vec{w}(j) = 1$, which completes the proof.
}
\end{proof}

\begin{definition}[Anchor space optimal transport (ASOT) problem]
\label{Def.ASOT}
Let $d_{\rm S}, d_{\rm AS}:\mathbb{R}^d\times\mathbb{R}^d\to\mathbb{R}_+$ be two metric functions on $\mathbb{R}^d$. Assume two discrete distributions $\mathcal{D}_x := (\mathcal{X}, \vec{a}), \mathcal{D}_y := (\mathcal{Y}, \vec{b})$ on a source metric space $S:=(\mathbb{R}^d, d_{\rm S})$, where $\mathcal{X} := \{\vec{x}_i\}_{i=1}^{n}$ and $\mathcal{Y}:=\{\vec{y}_j\}_{j=1}^{m}$ denote the sample sets. Also, $\vec{x}_i$ and $\vec{y}_j\in\mathbb{R}^d$ are $d$-dimensional feature vectors. $\vec{a} \in \Delta^p_n, \vec{b} \in \Delta^p_m$ respectively denote their probability (mass) vectors. {Given an anchor space $A:=(\mathcal{W},d_{\rm AS})$ and a mapping function $P:\mathbb{R}^{d}\to \Delta^p_k$, where $\mathcal{W}:=\{\vec{w}_i\}_{i=1}^k$ is a set of $k$ anchor points on $S$, and $\vec{w}_i \in \mathbb{R}^d$}, we define the $1$-Wasserstein distance of ASOT problem as
\begin{eqnarray}
	W_{\rm AS}(\mathcal{D}_x, \mathcal{D}_y, \mathcal{W}, P, d_{\rm AS}) \ :=\  \mathop{\rm minimize}_{{\bf T}_s\in\mathcal{U}(\bm{a}^\prime, \bm{b}^\prime)}~\langle\mat{T}_s, \mat{C}_s\rangle,\label{Eq.subspaceOT}\\
	\vec{a}^\prime := \sum_{i=1}^nP(\vec{x}_i)\vec{a}(i),\quad  \vec{b}^\prime := \sum_{j=1}^mP(\vec{y}_j)\vec{b}(j),\label{Eq.SOTcondi}
\end{eqnarray}
where $W_{\rm AS}(\mathcal{D}_x, \mathcal{D}_y, \mathcal{W}, P, d_{\rm AS})$ denotes the $1$-Wasserstein distance of the ASOT problem, and where $\mat{T}_s\in\mathbb{R}^{k\times k}$ denotes the anchor point transportation plan. $\mat{C}_s\in\mathbb{R}^{k\times k}$ denotes the anchor point cost matrix, where $\mat{C}_s(i,j) := d_{\rm AS}(\vec{w}_i, \vec{w}_j)$. {$\vec{a}^\prime, \vec{b}^\prime\in\Delta^p_k$ according to {\bf Lemma \ref{Lem.Sim}}.}
\end{definition}

We further define an entropic variant of the ASOT problem, envisioning the application of Sinkhorn's algorithm for GPU parallelization.
\begin{definition}[Entropic anchor space optimal transport (eASOT) problem]
\label{Def.eASOT}
Following the same definitions of {\bf Definition \ref{Def.ASOT}}, then an entropic variant problem of Eq. (\ref{Eq.subspaceOT}) is defined as
\begin{equation}
	W^e_{\rm AS}(\mathcal{D}_x, \mathcal{D}_y, \mathcal{W}, P, d_{\rm AS}) := \mathop{\rm minimize}_{{\bf T}_s\in\mathcal{U}(\bm{a}^\prime, \bm{b}^\prime)}~\langle\mat{T}_s, \mat{C}_s\rangle- \varepsilon H(\mat{T}_s)\label{Eq.entropic_subspaceOT},
\end{equation}
where $H(\mat{T}) := -\sum_{i,j}\mat{T}(i,j)(\log(\mat{T}(i,j)) - 1)$. This problem is called the entropic ASOT problem, expressed as eASOT.
\end{definition}

\subsection{{Properties of ASOT}}

This subsection presents several important and useful properties of our proposed ASOT and relative proofs.

\paratitle{ASOT meets the definition of an OT problem.} When reviewing the definition of an OT problem, it is readily apparent that ASOT is also an OT problem over two new distributions $\mathcal{D}_x^\prime := (\mathcal{W},\vec{a}^\prime)$ and $\mathcal{D}_y^\prime := (\mathcal{W},\vec{b}^\prime)$ with a ground cost matrix $\mat{C}_s$ computed using a distance measure $d_{\rm AS}$. This property enables us to solve an ASOT problem in the same way as OT does. The minimal cost of the eASOT problem is also guaranteed to have the same properties as the $1$-Wasserstein distance.

\paratitle{ASOT always has fixed distribution sizes.} As shown in the equations of {\bf Definition \ref{Def.ASOT}}, the sizes of distributions $\mathcal{D}_x^\prime,\mathcal{D}_y^\prime$, the transportation plan $\mat{T}_s$ and the cost matrix $\mat{C}_s$ are always fixed, which depends only on the {anchor space $A$}. Therefore, this property makes the ASOT problem efficiently solvable with GPU parallelization.

\paratitle{Upper bounded error to OT.} To convert an arbitrary OT problem into the form of our proposed ASOT problem while ensuring a controllable distance error, we present the upper bound of their absolute $1$-Wasserstein distance error as the following {\bf Proposition \ref{Prop.UB}}. Before that, we first prepare the following {\bf Lemma \ref{Lem.1}}, which will be used in the proof, and also present another important property of our proposed ASOT problem: its equivalence to a specific OT problem {with an expanded cost matrix}.

\begin{lemma}[Equivalence lemma]
		\label{Lem.1}
		{
		Assume an OT problem of two discrete distributions $\mathcal{D}_x := (\mathcal{X}, \vec{a}), \mathcal{D}_y := (\mathcal{Y}, \vec{b})$ as
		\begin{equation}
			\mathop{\rm minimize}_{{\bf T}\in\mathcal{U}(\bm{a}, \bm{b})}~\langle\mat{T}, \widehat{\mat{C}}\rangle,\label{Eq.el}
		\end{equation}
		where $\vec{a}\in\Delta^p_n, \vec{b}\in\Delta^p_m$, and $\widehat{\mat{C}}$ denotes the ground cost matrix.
		This problem is equivalent to the ASOT problem of $W_{\rm AS}(\mathcal{D}_x, \mathcal{D}_y, \mathcal{W}, P, d_{\rm AS})$ that satisfies $\widehat{\mat{C}} := \mat{Z}_x\mat{C}_s\mat{Z}_y^T$. $\mat{C}_s\in\mathbb{R}^{k\times k}$ denotes the anchor point cost matrix derived from $\mathcal{W}$. $\mat{Z}_x \in \mathbb{R}^{n\times k}, \mat{Z}_y \in \mathbb{R}^{m\times k}$, and $\mat{Z}_x(i,:) := P(\vec{x}_i), \mat{Z}_y(j,:) := P(\vec{y}_j)$.
		}
\end{lemma}

\begin{proof}
{
{We first define the minimal cost of Eq. (\ref{Eq.el}) as}
\begin{equation}
	W_1 := \mathop{\rm minimize}_{{\bf T}\in\mathcal{U}(\bm{a}, \bm{b})}~\langle \mat{T},\widehat{\mat{C}} \rangle\label{Eq.el2},
\end{equation}
where $\widehat{\mat{C}}\in\mathbb{R}^{n\times m}$ can be decomposed into $\widehat{\mat{C}} = \mat{Z}_x\mat{C}_s\mat{Z}^T_y$.
{Let $c(\mat{T}) := \langle\mat{T}, \widehat{\mat{C}}\rangle$ be the total cost w.r.t. $\mat{T}$}, then we have
\begin{eqnarray*}
c(\mat{T}) & = &\sum_{{i\in\llbracket n\rrbracket,j\in\llbracket m\rrbracket}} \sum_{{u,v\in\llbracket k\rrbracket}} \mat{Z}_x(i,u)\mat{Z}_y(j,v)\mat{C}_s(u,v)\mat{T}(i,j)\\
	    & = &\sum_{{u,v\in\llbracket k\rrbracket}} \sum_{{i\in\llbracket n\rrbracket,j\in\llbracket m\rrbracket}} \mat{Z}_x(i,u)\mat{Z}_y(j,v)\mat{T}(i,j)\mat{C}_s(u,v).
\end{eqnarray*}

Let $\mat{T}_s :=  \mat{Z}^T_x\mat{T}\mat{Z}_y\in\mathbb{R}^{k\times k}$. The above equations can be converted as $c(\mat{T})=\langle\mat{T}_s,\mat{C}_s \rangle$. Substituting it into Eq. (\ref{Eq.el2}) produces
\begin{equation}
W_1 =  \mathop{\rm minimize}_{{\bf T}\in\mathcal{U}(\bm{a}, \bm{b})}~\langle\mat{T}_s,\mat{C}_s \rangle.\label{Eq.beforeR}
\end{equation}

To reparameterize this problem with $\mat{T}_s$, we must find the domain of $\mat{T}_s$. Right multiplying $\mat{T}_s$ with $\vec{1}_k$ obtains
\begin{equation}
\mat{T}_s\vec{1}_k = \mat{Z}^T_x\mat{T}\mat{Z}_y\vec{1}_k.\label{Eq.domainT}
\end{equation}

Because each row of $\mat{Z}_y$ is a simplex vector, we have $\mat{Z}_y\vec{1}_k = \vec{1}_m$. Subtituting it into Eq. (\ref{Eq.domainT}) obtains $\mat{T}_s\vec{1}_k \quad = \quad \mat{Z}_x^T\mat{T}\vec{1}_m = \mat{Z}^T_x\vec{a}$.
In the same way, we can obtain {$\mat{T}_s^T\vec{1}_k = \mat{Z}^T_y\vec{b}$}. Futhermore, because for all ${i\in \llbracket n\rrbracket, j\in\llbracket m\rrbracket}$, we have $\mat{T}(i,j) \geq 0$, then $\forall {u,v\in\llbracket k\rrbracket}, \mat{T}_s(u,v) \geq 0$ holds. In summary, $\mat{T}_s\in\mathcal{U}(\vec{a}^\prime, \vec{b}^\prime)$, where
$\vec{a}^\prime = \mat{Z}^T_x\vec{a} = \sum_{i=1}^n\mat{Z}_x(i,:)\vec{a}(i)$,
$\vec{b}^\prime = \mat{Z}^T_y\vec{b} = \sum_{i=j}^m\mat{Z}_y(j,:)\vec{b}(j)$.
Finally, Eq. (\ref{Eq.beforeR}) is reparameterized as
\begin{equation}
W_1 =  \mathop{\rm minimize}_{{\bf T}_s\in\mathcal{U}(\bm{a}^\prime, \bm{b}^\prime)}\ \langle\mat{T}_s,\mat{C}_s \rangle.\label{Eq.afterR}
\end{equation}

Give $\mathcal{X}, \mathcal{Y}, \mathcal{W}$, mapping function $P$, and metric function $d_{\rm AS}$ as defined in {\bf Definition \ref{Def.AS}}, which satisfy $\forall {u,v \in \llbracket k\rrbracket}, \mat{C}_s(u,v) := d_{\rm AS}(\vec{w}_u, \vec{w}_v)$ and $\forall {i\in\llbracket n\rrbracket}, \mat{Z}_x(i,:) := P(\vec{x}_i)$ and $\forall {j\in\llbracket m\rrbracket}, \mat{Z}_y(j,:) := P(\vec{y}_j)$. Then Eq. (\ref{Eq.afterR}) is equivalent to the ASOT problem of $W_{\rm AS}(\mathcal{D}_x, \mathcal{D}_y, \mathcal{W},P,d_{\rm AS})$, which completes the proof.
}
\end{proof}

\begin{proposition}[Upper bound of absolute $1$-Wasserstein distance error]
\label{Prop.UB}
Assume an OT problem for $1$-Wasserstein distance $W_1(\mathcal{D}_x, \mathcal{D}_y,d_{\rm S})$, with $\mat{C}\in\mathbb{R}^{n\times m}$ as the ground cost matrix and $d_{\rm S}$ as the metric function. Define an ASOT problem $W_{\rm AS}(\mathcal{D}_x, \mathcal{D}_y, \mathcal{W}, P, d_{\rm AS})$ of \textbf{Definition \ref{Def.ASOT}} with given $\mathcal{W}, P$ and $d_{\rm AS}$. Let $\mat{Z}_x \in\mathbb{R}^{n\times k}, \mat{Z}_y\in\mathbb{R}^{m\times k}$ be matrices where $\mat{Z}_x(i,:) := P(\vec{x}_i)$ and $\mat{Z}_y(j,:) := P(\vec{y}_j)$. We define a reconstructed cost matrix $\widehat{\mat{C}} := \mat{Z}_x\mat{C}_s\mat{Z}_y^T \in \mathbb{R}^{n\times m}$. Then the following inequality holds:
\begin{equation}
|W_{\rm AS}(\mathcal{D}_x, \mathcal{D}_y, \mathcal{W}, P, d_{\rm AS}) - W_1(\mathcal{D}_x, \mathcal{D}_y,d_{\rm S})| \leq \|\widehat{\mat{C}} - \mat{C}\|_1.\label{Eq.UB}
\end{equation}
\end{proposition}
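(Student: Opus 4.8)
The plan is to reduce the proposition to a standard linear-programming stability estimate by invoking \textbf{Lemma \ref{Lem.1}}. That lemma recasts the ASOT value as an ordinary OT problem over the \emph{same} coupling polytope $\mathcal{U}(\vec{a},\vec{b})$ that defines $W_1$, but with the reconstructed cost $\widehat{\mat{C}}=\mat{Z}_x\mat{C}_s\mat{Z}_y^T$ in place of $\mat{C}$. Once I have written both quantities as
\begin{equation*}
W_1 = \min_{\mat{T}\in\mathcal{U}(\vec{a},\vec{b})}\langle\mat{T},\mat{C}\rangle, \qquad W_{\rm AS} = \min_{\mat{T}\in\mathcal{U}(\vec{a},\vec{b})}\langle\mat{T},\widehat{\mat{C}}\rangle,
\end{equation*}
the claim becomes the familiar statement that the optimal value of a linear program moves by at most the cost perturbation evaluated against a feasible point.

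The second step is the usual ``cross-evaluation'' argument. Let $\mat{T}^\star$ and $\widehat{\mat{T}}^\star$ be optimal plans for the costs $\mat{C}$ and $\widehat{\mat{C}}$ respectively, both feasible in $\mathcal{U}(\vec{a},\vec{b})$. Using optimality of each plan against its own cost together with feasibility against the other cost, I would derive the two-sided bound
\begin{equation*}
\langle\widehat{\mat{T}}^\star,\widehat{\mat{C}}-\mat{C}\rangle \ \leq\ W_{\rm AS}-W_1 \ \leq\ \langle\mat{T}^\star,\widehat{\mat{C}}-\mat{C}\rangle,
\end{equation*}
so that $|W_{\rm AS}-W_1|$ is controlled by $|\langle\mat{T},\widehat{\mat{C}}-\mat{C}\rangle|$ for a feasible coupling $\mat{T}$.

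The final step bounds this inner product entrywise. Since every $\mat{T}\in\mathcal{U}(\vec{a},\vec{b})$ satisfies $0\le\mat{T}(i,j)\le\min(\vec{a}(i),\vec{b}(j))\le 1$, the triangle inequality yields
\begin{equation*}
|\langle\mat{T},\widehat{\mat{C}}-\mat{C}\rangle| \ \le\ \sum_{i,j}\mat{T}(i,j)\,|\widehat{\mat{C}}(i,j)-\mat{C}(i,j)| \ \le\ \sum_{i,j}|\widehat{\mat{C}}(i,j)-\mat{C}(i,j)| \ =\ \|\widehat{\mat{C}}-\mat{C}\|_1,
\end{equation*}
interpreting $\|\cdot\|_1$ as the entrywise $\ell_1$ norm and applying the bound to both sides of the previous display gives Eq.~(\ref{Eq.UB}). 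The strict inequality follows because for non-degenerate marginals ($n,m\ge 2$) the coupling entries are bounded away from $1$, so the middle relaxation is strict whenever $\widehat{\mat{C}}\neq\mat{C}$.

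The main obstacle is not any single calculation but the \emph{reduction} itself: a priori the two problems live on transport plans of different shapes ($\mat{T}_s\in\mathbb{R}^{k\times k}$ versus $\mat{T}\in\mathbb{R}^{n\times m}$), and only \textbf{Lemma \ref{Lem.1}} reconciles them onto a common polytope where the perturbation argument applies. A secondary subtlety is justifying the strict ``$<$'' rather than ``$\le$''; I would handle this precisely via the strict entry bound $\mat{T}(i,j)<1$ for genuine distributions, as noted above.
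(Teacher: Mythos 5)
Your proposal is correct and takes essentially the same route as the paper's proof: invoke Lemma \ref{Lem.1} to place $W_{\rm AS}$ and $W_1$ on the common polytope $\mathcal{U}(\vec{a},\vec{b})$ with costs $\widehat{\mat{C}}$ and $\mat{C}$, cross-evaluate the two optimal plans, and bound the resulting inner product by the entrywise $\ell_1$ norm. The only cosmetic differences are that you merge the paper's two cases into a single two-sided sandwich and pass to $\|\widehat{\mat{C}}-\mat{C}\|_1$ directly via $\mat{T}(i,j)\le 1$ (the paper instead uses $\sum_{i,j}\mat{T}^\star(i,j)=1$ to go through $\|\cdot\|_\infty$ first), and your discussion of when the inequality is strict is if anything more careful than the paper's.
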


\begin{proof}
{
According to {\bf Lemma \ref{Lem.1}}, the ASOT problem of $W_{\rm AS}(\mathcal{D}_x, \mathcal{D}_y, \mathcal{W}, P, d_{\rm AS})$ is equivalant to an OT problem of Eq. (\ref{Eq.el}). Let $\widehat{\mat{T}}^\star$ be the optimal solution of Eq. (\ref{Eq.el}), and let $\mat{T}^\star$ be the optimal solution of Eq. (\ref{Eq.Wass_app}). We write the absolute $1$-Wasserstein distance error as
\begin{eqnarray*}
	\eta&:=&|W_{\rm AS}(\mathcal{D}_x, \mathcal{D}_y, \mathcal{W}, P, d_{\rm AS}) - W_1(\mathcal{D}_x, \mathcal{D}_y,d_{\rm S})|\\
	&=& \left|\langle \widehat{\mat{T}}^\star, \widehat{\mat{C}} \rangle - \langle \mat{T}^\star, \mat{C} \rangle\right|.
\end{eqnarray*}

In the case of $W_{\rm AS}(\mathcal{D}_x, \mathcal{D}_y, \mathcal{W}, P, d_{\rm AS}) > W_1(\mathcal{D}_x, \mathcal{D}_y,d_{\rm S})$, we have
\begin{eqnarray*}
	\eta&=&\sum_{{i\in\llbracket n\rrbracket,j\in\llbracket m\rrbracket}}\widehat{\mat{T}}^\star(i,j)\widehat{\mat{C}}(i,j) -\mat{T}^\star(i,j)\mat{C}(i,j)\\
	&\leq&\sum_{{i\in\llbracket n\rrbracket,j\in\llbracket m\rrbracket}}\mat{T}^\star(i,j)\widehat{\mat{C}}(i,j) - \mat{T}^\star(i,j)\mat{C}(i,j)\\
	&=&\sum_{{i\in\llbracket n\rrbracket,j\in\llbracket m\rrbracket}}\mat{T}^\star(i,j)\left(\widehat{\mat{C}}(i,j) - \mat{C}(i,j)\right).
\end{eqnarray*}

{The inequality {above} is because the OT problem is convex, leading to only one single global minimum}. Moreover, $\mat{T}^\star(i,j) \leq 1$ and $\sum_{{i\in\llbracket n\rrbracket,j\in\llbracket m\rrbracket}}\mat{T}^\star(i,j) = 1$ holds because $\mat{T}^\star \in\mathcal{U}(\vec{a}, \vec{b})$ and $\vec{a}\in\Delta^p_n, \vec{b}\in\Delta^p_m$. Then we have
\begin{eqnarray*}
	\eta&\leq&\sum_{{i\in\llbracket n\rrbracket,j\in\llbracket m\rrbracket}}\mat{T}^\star(i,j)\left(\widehat{\mat{C}}(i,j) - \mat{C}(i,j)\right)\\
	&\leq&\sum_{{i\in\llbracket n\rrbracket,j\in\llbracket m\rrbracket}}\mat{T}^\star(i,j)\left|\widehat{\mat{C}}(i,j) - \mat{C}(i,j)\right|\\
	&\leq&\|\widehat{\mat{C}} - \mat{C}\|_\infty \leq \|\widehat{\mat{C}} - \mat{C}\|_1.
\end{eqnarray*}

In the case of $W_{\rm AS}(\mathcal{D}_x, \mathcal{D}_y, \mathcal{W}, P, d_{\rm AS}) \leq W_1(\mathcal{D}_x, \mathcal{D}_y,d_{\rm S})$, similarly we have
\begin{eqnarray*}
	\eta&=&\sum_{{i\in\llbracket n\rrbracket,j\in\llbracket m\rrbracket}}\mat{T}^\star(i,j)\mat{C}(i,j) - \widehat{\mat{T}}^\star(i,j)\widehat{\mat{C}}(i,j) \\
	&\leq&\sum_{{i\in\llbracket n\rrbracket,j\in\llbracket m\rrbracket}}\widehat{\mat{T}}^\star(i,j)\left(\mat{C}(i,j) - \widehat{\mat{C}}(i,j)\right)\\
	&\leq&\sum_{{i\in\llbracket n\rrbracket,j\in\llbracket m\rrbracket}}\widehat{\mat{T}}^\star(i,j)\left|\mat{C}(i,j) - \widehat{\mat{C}}(i,j)\right|\\
	&\leq&\|\mat{C} - \widehat{\mat{C}}\|_\infty \leq \|\widehat{\mat{C}} - \mat{C}\|_1.
\end{eqnarray*}

Combining the both cases above completes the proof.
}
\end{proof}

Based on {\bf Proposition \ref{Prop.UB}}, we can convert a given OT problem into an ASOT problem by minimizing $\|\widehat{\mat{C}} - \mat{C}\|_1$, which can be viewed as a function of three variables: $\mathcal{W}, P$, and $d_{\rm AS}$. {It is noteworthy that although $\|\widehat{\mat{C}} - \mat{C}\|_\infty$ is a {lower} upper bound, we still use the $\ell_1$-norm because it is easier to solve}. To solve this optimization problem, we propose three methods for different backgrounds, one of which is based on the metric learning and the other two of which are based on $k$-means clustering and deep dictionary learning.

\subsection{Complexity analysis}
We present a comparison of the time and space complexity of the one-by-one computation {with original OT problem} and batch processing with our ASOT problem. For this comparison, we let $N$ denote the number of distributions, and let $s$ denote the average number of samples in a distribution. We assume full-pair OT problems on the distributions where OT problems are solved for all combinations of distributions.

{\paratitle{Complexities for the preparation process.} For time complexity, one-by-one computation requires $O(N^2s^2)$ computation to instantiate {cost matrices} of all combinations, whereas ASOT only requires $O(Ns)$ to convert distributions into anchor space distributions with Eq. (\ref{Eq.SOTcondi}). For space complexity, one-by-one computation requires $O(N^2s^2)$ memory space for restoring every cost matrix, whereas ASOT only requires $O(k^2)$ because we need only to store the pairwise anchor point cost matrix $\mat{C}_s\in\mathbb{R}^{k\times k}$. Therefore, by setting $k$ and $s$ at the same level of magnitude, ASOT will save much memory space and will be able to create a mini-batch {of OT problems} of a larger size {in parallelization process}, which can also accelerate computation given a limited memory size.}

{\paratitle{Complexities for the solution process.} Complexities of the solution process depend on which algorithm we use to solve the OT problem. We take the Sinkhorn solver as an example. One-by-one computation requires an $O(s^3)$ to compute $\mat{K}\vec{v}$ and $\mat{K}^T\vec{u}$ for each single Sinkhorn loop. By contrast, ASOT requires $O(k^3)$, where $k$ denotes the number of anchor points.

\subsection{GPU-parallelizable Sinkhorn for eASOT problem}
\label{Sec.GPU}
The preceding subsection introduced the definition of the ASOT problem on a single pair of distributions. For GPU parallelization on a multiple OT problem set, we consider computing their $1$-Wasserstein distances by solving the transformed eASOT problems over a shared anchor space of {$A$}. In this case, all the distribution pairs in the batch share a set of parameters of $\mathcal{W}, P$ and $d_{\rm AS}$, which need only be learned once.
According to {\bf Remark 4.16} in \cite{Peyre_2019_OTBook_s}, we use the GPU parallelization version of Sinkhorn's algorithm to solve the eASOT problem. 
More concretely, given $N$ {converted} probability distribution pairs of {ASOT} $\{(\vec{a}_i, \vec{b}_i)\}_{i=1}^N$, where $\vec{a}_i, \vec{b}_i \in\mathbb{R}^k$, and $\mat{A} := [\vec{a}_1, \cdots, \vec{a}_N]\in\mathbb{R}^{k\times N}$, $\mat{B}:= [\vec{b}_1, \cdots, \vec{b}_N]\in\mathbb{R}^{k\times N}$, the following equation updates the dual variables $\vec{u}_i \in \mathbb{R}^k$ and $\vec{v}_i \in\mathbb{R}^k$ at the $j$-th iteration of Sinkhorn's algorithm with GPU parallelization.
\begin{eqnarray*}
	\mat{U}^{(j+1)} = \frac{\mat{A}}{\mat{K}\mat{V}^{(j)}}, \quad \mat{V}^{(j+1)} = \frac{\mat{B}}{\mat{K}^T\mat{U}^{(j+1)}}.
\end{eqnarray*}

Therein, $\mat{U} := [\vec{u}_1, .\cdots, \vec{u}_N], \mat{V}:= [\vec{v}_1, .\cdots, \vec{v}_N]\in\mathbb{R}^{k\times N}$, and $\mat{K} := \exp\left(-\frac{{\bf C}_s}{\varepsilon}\right) \in \mathbb{R}^{k\times k}$ is the Gibbs kernel. 

\section{{Anchor Space Learning}}
\label{Sec.DeepDicLearn}

As described above, to convert OT problems into ASOT problems {with low approximation error}, we must learn an anchor space using the upper bound in {\bf Proposition \ref{Prop.UB}} w.r.t. $\mathcal{W},P$ and $d_{\rm AS}$. This section presents a deep metric learning framework, a $k$-means-based framework, and a deep dictionary learning framework to solve this {learning} problem.

\subsection{Deep metric learning ASOT (ASOT-ML)}
{In our metric learning framework for ASOT}, to minimize $\|\widehat{\mat{C}} - \mat{C}\|_1$, we first assume that $\widehat{\mat{C}}$ is computed using a parameterized metric function $f(\mathcal{W},P,d_{\rm AS}) $, defined as presented below.
\begin{eqnarray*}
 	\widehat{\mat{C}}(i,j) &:=& f(\mathcal{W},P,d_{\rm AS})\\
 	&:=& \sum_{{u,v\in\llbracket k\rrbracket}}\mat{Z}_x(i,u)\cdot\mat{Z}_y(j,v)\cdot d_{\rm AS}(\vec{w}_u, \vec{w}_v)\\
 	&=& \sum_{{u,v\in\llbracket k\rrbracket}}P(\vec{x}_i)(u)\cdot P(\vec{y}_j)(v)\cdot d_{\rm AS}(\vec{w}_u, \vec{w}_v).
\end{eqnarray*}

Then the problem becomes a metric learning problem from $f$ to $d_{\rm S}$, which is the ground metric function of $\mat{C}$. Because we have two variables $P$ and $d_{\rm AS}$ that are functions, we parameterize $P$ with a multilayer perception network (MLP). We use the Mahalanobis distance for $d_{\rm AS}$, which is computed as $d_{\rm AS}(\vec{w}_u, \vec{w}_v) := \|\mat{M}\vec{w}_u - \mat{M}\vec{w}_v\|_2$, where $\mat{M}\in\mathbb{R}^{h\times k}$ is a learned parameter matrix. Also, $h$ denotes the number of hidden dimensions. It is noteworthy that, to ensure that the outputs of $P$ are within the probability simplex, the outputs of the MLP for $P$ are normalized by dividing their $\ell_1$-norms. Thereafter, we adopt an automatic differentiation neural network system to build our framework, as shown in (a) in Figure \ref{Fig.DicLearn}. The loss function for the framework is defined as $\eta_{i,j} :=|f(\mathcal{W},P,d_{\rm AS}) - d_{\rm S}(\vec{x}_i, \vec{y}_j )|$ for each pair of sampling points. To avoid the subgradient, we adopt an $\ell_2$-loss as $\mathcal{L}_{\rm ML} := \frac{C}{nm}\sum_{{i\in \llbracket n\rrbracket, j\in \llbracket m\rrbracket}}\eta_{i,j}^2$, where $C$ is a scalar.

However, several limitations exist for ASOT-ML: (i) Quadratic complexity --- Because ASOT-ML needs pairwise distance errors, it has both time complexity and space complexity of $O(n^2)$. Therefore, it is impossible to conduct full pairwise distance computation on a large-scale dataset. One must use mini-batch pairwise distances, which might affect its performance. (ii) Unsafe solution --- ASOT-ML cannot control the scale of the learned cost matrix, which might cause a risk of overflow while computing Sinkhorn's algorithm. That is true because, when $\mat{C}_s$ has an overly large or overly small value, the computation of $\mat{K} := \exp(-\frac{{\bf C}_s}{\varepsilon})$ will produce outliers that affect the update process. {Although the log-domian Sinkhorn ({\bf Remark 4.23} in \cite{Peyre_2019_OTBook_s}) can resolve this difficulty, it also makes the computation hard to be parallelized.}

\subsection{$k$-means-based (ASOT-$k$) and Deep dictionary learning ASOT (ASOT-DL)}

\paratitle{Upper bound under the one-hot condition.} To avoid limitations of ASOT-ML, {we consider the ASOT problem under a one-hot condition. According to \textbf{Proposition \ref{Prop.UBoh}}, the metric learning problem for learning the anchor space can be reformulated as a feature reconstruction problem. This reformulation effectively reduces both computational complexity and the risk of unsafe solutions. Consequently,} we propose a lightweight {$k$-means framework and a} deep dictionary learning framework based on the ASOT under the one-hot condition that only {requires} $O(n)$ level complexity. We present the following proposition of the upper bound and its proof.
\begin{proposition}[Upper bound of absolute $1$-Wasserstein error under the one-hot condition]
\label{Prop.UBoh}
Let $\vec{q}_a^b\in \mathbb{R}^a$ denote an $a$-dimensional one-hot vector of which the $b$-th element is equal to $1$, i.e. $\vec{q}_a^b(b) := 1$. {Let $\mathcal{Q}:=\{\vec{q}_k^i\}_{i=1}^k$ be the $k$-dimensional one-hot vector set}. Let $P_{o}:\mathbb{R}^d\to\mathcal{Q}$ be a one-hot mapping function. Let $\mathcal{W} := \{\vec{w}_i\}_{i=1}^k$ and $\mat{W} := [\vec{w}_1,\cdots,\vec{w}_k] \in\mathbb{R}^{d\times k}, \vec{w}_i\in\mathbb{R}^d$. Also, let there be two discrete distributions $\mathcal{D}_x := (\mathcal{X}, \vec{a})$ and $\mathcal{D}_y := (\mathcal{Y}, \vec{b})$, where $\mathcal{X} := \{\vec{x}_i\}_{i=1}^{n}, \mathcal{Y}:=\{\vec{y}_j\}_{j=1}^{m}$ and $\vec{a}\in\Delta^p_n, \vec{b}\in\Delta^p_m$. $\vec{x}_i\in\mathbb{R}^d$ and $\vec{y}_j\in\mathbb{R}^d$ are both $d$-dimensional feature vectors. Assume an original OT problem $W_1(\mathcal{D}_x,\mathcal{D}_y,d_2)$, which is converted into an ASOT problem as $W_{\rm AS}(\mathcal{D}_x,\mathcal{D}_y,\mathcal{W},P_o,d_2)$, where $d_2$ is the Euclidean distance function. Then the following inequality holds:
\begin{eqnarray*}
|W_{\rm AS}(\mathcal{D}_x, \mathcal{D}_y, \mathcal{W}, P_o, d_2) - W_1(\mathcal{D}_x, \mathcal{D}_y,d_2)| \\
\leq m\sum_{{i\in\llbracket n\rrbracket}} \|\vec{\epsilon}_i^x\|_2 + n\sum_{{j\in\llbracket m\rrbracket}}\|\vec{\epsilon}_j^y\|_2,
\end{eqnarray*}
where $\vec{\epsilon}_i^x:= \mat{W}P_o(\vec{x}_i) - \vec{x}_i, \vec{\epsilon}_j^y := \mat{W}P_o(\vec{y}_j) - \vec{y}_j$ represent the offsets of reconstructed feature vectors to the original ones.
\end{proposition}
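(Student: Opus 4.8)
The plan is to reduce everything to \textbf{Proposition \ref{Prop.UB}}, which already supplies $|W_{\rm AS} - W_1| < \|\widehat{\mat{C}} - \mat{C}\|_1$ with $\widehat{\mat{C}} = \mat{Z}_x\mat{C}_s\mat{Z}_y^T$. Since the entrywise $1$-norm is just $\sum_{i,j}|\widehat{\mat{C}}(i,j) - \mat{C}(i,j)|$, it suffices to bound each entry $|\widehat{\mat{C}}(i,j) - \mat{C}(i,j)|$ by $\|\vec{\epsilon}_i^x\|_2 + \|\vec{\epsilon}_j^y\|_2$ and then sum with the correct multiplicities. Because we now take $d_{\rm S} = d_{\rm AS} = d_{\rm E}$, we have $\mat{C}(i,j) = \|\vec{x}_i - \vec{y}_j\|_2$ and $\mat{C}_s(u,v) = \|\vec{w}_u - \vec{w}_v\|_2$, so the whole argument is purely geometric.

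The crucial step, and the one that makes the one-hot assumption indispensable, is simplifying $\widehat{\mat{C}}(i,j)$. In general $\widehat{\mat{C}}(i,j) = \sum_{u,v\in[k]} \mat{Z}_x(i,u)\,\mat{Z}_y(j,v)\,\|\vec{w}_u - \vec{w}_v\|_2$ is merely a convex combination of pairwise anchor distances, which does \emph{not} coincide with the distance between the reconstructed points; this is precisely why the general-case bound of \textbf{Proposition \ref{Prop.UB}} cannot be sharpened into a reconstruction-error form. Under $P_o$, however, each row $\mat{Z}_x(i,:) = P_o(\vec{x}_i)$ and $\mat{Z}_y(j,:) = P_o(\vec{y}_j)$ is one-hot, say $\vec{q}_k^{\,u(i)}$ and $\vec{q}_k^{\,v(j)}$, so the double sum collapses to the single surviving term $\widehat{\mat{C}}(i,j) = \|\vec{w}_{u(i)} - \vec{w}_{v(j)}\|_2$. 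Recognizing $\mat{W}P_o(\vec{x}_i) = \mat{W}\vec{q}_k^{\,u(i)} = \vec{w}_{u(i)}$ (and likewise for $\vec{y}_j$), this is exactly $\widehat{\mat{C}}(i,j) = \|\mat{W}P_o(\vec{x}_i) - \mat{W}P_o(\vec{y}_j)\|_2$, i.e. the Euclidean distance between the \emph{reconstructed} features. I expect this collapse to be the main obstacle to articulate cleanly, since it is where the structure of $P_o$ is genuinely used.

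With this identity the remainder is two triangle inequalities. Writing $\vec{u} := \mat{W}P_o(\vec{x}_i) - \mat{W}P_o(\vec{y}_j)$ and $\vec{v} := \vec{x}_i - \vec{y}_j$, the reverse triangle inequality gives $\bigl|\,\|\vec{u}\|_2 - \|\vec{v}\|_2\,\bigr| \leq \|\vec{u} - \vec{v}\|_2$, and since $\vec{u} - \vec{v} = \vec{\epsilon}_i^x - \vec{\epsilon}_j^y$ by the definitions of the offsets, an ordinary triangle inequality yields $|\widehat{\mat{C}}(i,j) - \mat{C}(i,j)| \leq \|\vec{\epsilon}_i^x - \vec{\epsilon}_j^y\|_2 \leq \|\vec{\epsilon}_i^x\|_2 + \|\vec{\epsilon}_j^y\|_2$. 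Summing over all $i\in[n], j\in[m]$ and counting multiplicities (each $\|\vec{\epsilon}_i^x\|_2$ appears $m$ times, each $\|\vec{\epsilon}_j^y\|_2$ appears $n$ times) gives $\|\widehat{\mat{C}} - \mat{C}\|_1 \leq m\sum_{i\in[n]}\|\vec{\epsilon}_i^x\|_2 + n\sum_{j\in[m]}\|\vec{\epsilon}_j^y\|_2$, and chaining this with the strict inequality of \textbf{Proposition \ref{Prop.UB}} delivers the claimed bound.
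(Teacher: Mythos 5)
Your proposal is correct and follows essentially the same route as the paper's own proof: invoke \textbf{Proposition \ref{Prop.UB}}, use the one-hot structure of $P_o$ to collapse $\widehat{\mat{C}}(i,j)$ to the anchor-to-anchor distance $\|\vec{w}_{u(i)}-\vec{w}_{v(j)}\|_2$, then apply the reverse triangle inequality (the paper packages this as the $1$-Lipschitz property of the $\ell_2$ norm) followed by the ordinary triangle inequality, and sum with multiplicities $m$ and $n$. Your handling of the strict versus non-strict inequalities in the final chaining is in fact slightly cleaner than the paper's write-up.
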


\begin{proof}
{
According to {\bf Proposition \ref{Prop.UB}}, we have $|W_{\rm AS}(\mathcal{D}_x, \mathcal{D}_y, \mathcal{W}, P_o, d_2) - W_1(\mathcal{D}_x, \mathcal{D}_y,d_2)| \leq \| \widehat{\mat{C}} - \mat{C} \|$. Expanding this inequality obtains
\begin{eqnarray*}
	\eta&:=&|W_{\rm AS}(\mathcal{D}_x, \mathcal{D}_y, \mathcal{W}, P_o, d_2) - W_1(\mathcal{D}_x, \mathcal{D}_y,d_2)|\\ 
	&\leq& \| \widehat{\mat{C}} - \mat{C} \|_1\\
	&=&\sum_{{i\in\llbracket n\rrbracket,j\in\llbracket m\rrbracket}}\left|\widehat{\mat{C}}(i,j) - \mat{C}(i,j)   \right|.
\end{eqnarray*}

Because $\widehat{\mat{C}}$ can be decomposed into $\widehat{\mat{C}} = \mat{Z}_x\mat{C}_s\mat{Z}^T_y$ according to {\bf Proposition 1}, we have
\begin{eqnarray*}
	\eta&\leq& \sum_{{i\in\llbracket n\rrbracket,j\in\llbracket m\rrbracket}}\left|\sum_{{u,v\in\llbracket k\rrbracket}}\vec{z}^x_i(u) \vec{z}^y_j(v)\mat{C}_s(u,v)- \mat{C}(i,j)   \right|,
\end{eqnarray*}
where $\vec{z}^x_i:=P_o(\vec{x}_i), \vec{z}^y_j:=P_o(\vec{y}_j)$.
Because $P_o$ is a one-hot mapping function, we can let the index of the non-zero element of $P_o(\vec{x}_i)$ be $u_i$, and let the one of $P_o(\vec{x}_i)$ be $v_i$. Substituting this into the equation above obtains
$
	\eta\leq\sum_{{i\in\llbracket n\rrbracket,j\in\llbracket m\rrbracket}}\left| \mat{C}_s(u_i,v_j) - \mat{C}(i,j) \right|.
$
From {\bf Definition 3}, we have $\mat{C}_s(u_i,v_j) := d_2(\vec{w}_{u_i}, \vec{w}_{v_j}) := \|\vec{w}_{u_i}- \vec{w}_{v_j}\|_2$ and $\mat{C}(i,j) := d_2(\vec{x}_i, \vec{y}_j) := \|\vec{x}_i - \vec{y}_j\|_2$. Then, $\epsilon_i^x = \vec{w}_{u_i} - \vec{x}_i,  \epsilon_j^y = \vec{w}_{v_j} - \vec{y}_j$. Substituting these into the inequality above obtains
\begin{equation*}
	\!\!\eta\leq\!\!\sum_{{i\in\llbracket n\rrbracket,j\in\llbracket m\rrbracket}}\left| \|\vec{w}_{u_i}- \vec{w}_{v_j}\|_2 - \|\vec{w}_{u_i} - \vec{w}_{v_j} + \vec{\epsilon}_j^y - \vec{\epsilon}_i^x\|_2 \right|\!\!.
\end{equation*}

According to the reverse triangle inequality \cite{1855popular}, we have
\begin{eqnarray*}
	\eta&\leq&\sum_{{i\in\llbracket n\rrbracket,j\in\llbracket m\rrbracket}}\|\vec{\epsilon}_j^y - \vec{\epsilon}_i^x\|_2\\
	&\leq&\sum_{{i\in\llbracket n\rrbracket,j\in\llbracket m\rrbracket}}\|\vec{\epsilon}_j^y\|_2 + \|\vec{\epsilon}_i^x\|_2\\
	& =&m\sum_{{i\in\llbracket n\rrbracket}} \|\vec{\epsilon}_i^x\|_2 + n\sum_{{j\in\llbracket m\rrbracket}}\|\vec{\epsilon}_j^y\|_2,
\end{eqnarray*}
which completes the proof.
}
\end{proof}

From {\bf Proposition \ref{Prop.UBoh}}, with a special one-hot encoding $P_o$ and a shared $\ell_2$-norm distance function $d_2$ for both the anchor space and original space, the distance error can be upper bounded within a summation of reconstruction errors. In this case, only two parameters are to be learned: $\mathcal{W}$ and $P_o$. Both our proposed ASOT-$k$ and ASOT-DL methods are based on the same optimization problem derived from {\bf Proposition \ref{Prop.UBoh}}, which is  
\begin{equation}
\mathop{\rm minimize}_{\mathcal{W},P_o}~\sum_{{i\in\llbracket n\rrbracket}}\|\vec{\epsilon}_i^x\|_2 + \sum_{{j\in\llbracket m\rrbracket}}\|\vec{\epsilon}_j^y\|_2.\label{Eq.prop2}
\end{equation}

This objective function only needs computation of $O(n+m)$ complexity. Furthermore, for limitation of the unsafe solution, because of the one-hot constraint and the share of the metric function $d_{\rm S}$ over both the anchor space and the original space, the learned cost matrix of anchor points $\mat{C}_s$ usually has the same scale as that of the original OT problem $\mat{C}$. Therefore, it is less likely to provide unsafe solutions with {\bf Proposition \ref{Prop.UBoh}}. Finally, because {\bf Proposition \ref{Prop.UBoh}} is built upon extra conditions from {\bf Proposition \ref{Prop.UB}}, its solution set should be included in the latter one. Theoretically, ASOT-ML can provide better solutions than either ASOT-$k$ or ASOT-DL.

By minimizing reconstruction errors, the problem becomes a classical clustering problem or a dictionary learning problem. A clustering problem can be solved easily with the $k$-means algorithm and its variants, where $\mathcal{W}$ is the set of cluster centers and $P_o$ is the clustering operator. Therefore, we use the $k$-means algorithm to learn the anchor space in our ASOT-$k$ method. However, the one-hot condition might be too strict, which makes it less possible to achieve low reconstruction errors. Therefore, we also propose a deep dictionary learning framework ASOT-DL to solve a relaxed problem of Eq. (\ref{Eq.prop2}). More specifically, we use an $\ell_2$-ball constraint and a simplex constraint to represent the one-hot condition and then propose a relaxed problem based on them. We adopt a framework from \cite{scetbon2021deep}, named deep K-SVD, with the addition of some modifications to solve this relaxed problem.

\paratitle{Total loss for deep dictionary learning.} We first elaborate on the definition of total loss for our deep dictionary learning framework before presenting details of the network structure.
Let $\vec{x}_i\in\mathbb{R}^d$ be a $d$-dimensional sampling point $\mat{X}:= [\vec{x}_1, \vec{x}_2, \cdots, \vec{x}_n]^T\in\mathbb{R}^{n\times d}$.
Let $\vec{z}_i := \widetilde{P}_o(\vec{x}_i) \in\mathbb{R}^k$ be a near-one-hot encoding for $\vec{x}_i$, $\mat{Z} := [\vec{z}_1, \cdots, \vec{z}_n]^T \in\mathbb{R}^{n\times k}$. Given a anchor point matrix $\mat{W} :=  [\vec{w}_1,\cdots,\vec{w}_k] \in\mathbb{R}^{d\times k}$, we first define a total loss $\mathcal{L}$ for our deep dictionary learning framework as $\mathcal{L}(\mat{W}, \mat{Z}) := \alpha  \cdot \mathcal{L}_{\rm rc} + \beta \cdot \mathcal{L}_{\rm l2}+ \gamma \cdot \mathcal{L}_{\rm sp}$, 
where 
$\mathcal{L}_{\rm rc}$,
$\mathcal{L}_{\rm l2}$, and 
$\mathcal{L}_{\rm sp}$ respectively represent a reconstruction loss, an $ \ell_2$-ball loss, and a simplex constraint violation loss. Also, $\alpha, \beta, \gamma \in\mathbb{R}_+$ denote the weights of their corresponding losses. These losses are computed as
$\mathcal{L}_{\rm rc} := \frac1{2n}\sum_{i = 1}^n\|\mat{W}\cdot(\mat{Z}(i,:))^T - (\mat{X}(i,:))^T\|_2^2,
\mathcal{L}_{\rm l2} := \frac1n\sum_{i=1}^n\left(\|\mat{Z}(i,:)\|_p - 1\right)^2,
\mathcal{L}_{\rm sp} := \frac1n\sum_{i=1}^n\left(\mat{Z}(i,:)\vec{1}_k - 1\right)^2,$
where $p > 1$ is a scalar. For the $\ell_2$-ball loss and the simplex constraint violation loss, they penalize the learned near-one-hot codings to constrain them within an intersection of the surface of a unit $\ell_2$-ball and a unit simplex $\Delta^u_k$, which is the one-hot vector set $\{\vec{q}_k^i\}_{i=1}^k$. Therefore, we can optimize an approximate solution by solving the following relaxed problem of Eq. (\ref{Eq.prop2}): $\mathop{\rm minimize}_{\bm{W},\bm{Z}}l(\mat{W},\mat{Z})$.

\begin{figure}
	\centering
	\includegraphics[width= 0.8\hsize]{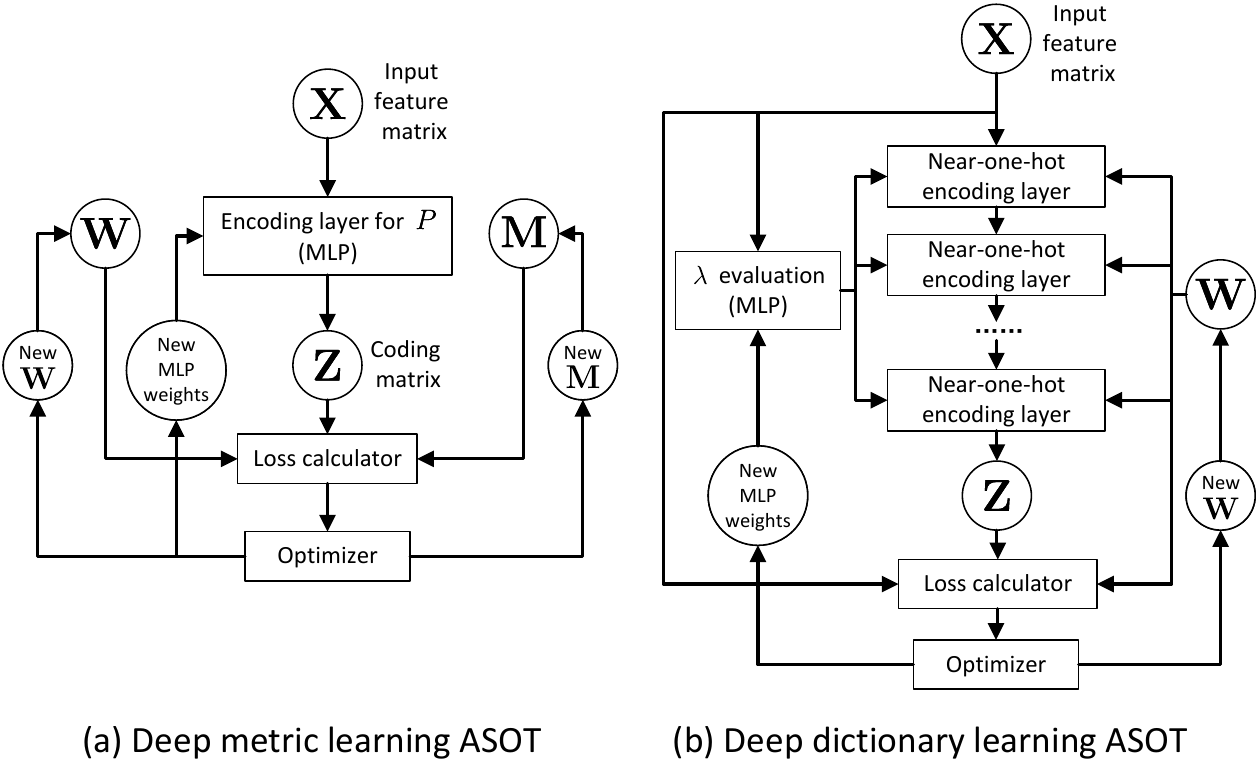}

	\caption{The overall framework of our proposed deep anchor space learning methods, where $\mat{X}\in\mathbb{R}^{n\times d}$ denotes the input feature matrix of sampling points, $\mat{Z}:= P(\mat{X})$ denotes the matrix of anchor space coordinates, $\mat{W}\in\mathbb{R}^{k\times d}$ represents the anchor point matrix and $\mat{M}\in\mathbb{R}^{h\times k}$ stands for the parameter matrix for metric learning. The mapping function $P$ of ASOT-ML from (a) is parameterized with an MLP with output normalization, whereas $\widetilde{P}_o$ of ASOT-DL from (b) consists of several near-one-hot encoding layers. Each computes one iteration of Eq. (\ref{Eq.sp_layer}) with an adapted $\lambda$ evaluated by MLP.}
	\label{Fig.DicLearn}
\end{figure}

\paratitle{Framework structure details} 
The original paper of deep K-SVD presented a deep dictionary learning model based on the ISTA algorithm \cite{daubechies2004iterative} with an adapted $\lambda$, which is actually a sparse encoder. The ISTA algorithm solves the lasso problem $\mathop{\rm minimize}_{\bm{z}}\frac12\|\mat{W}\vec{z} - \vec{x}\|_2^2 + \lambda\|\vec{z}\|_1$ with the proximal gradient descent method which projects the solution of each iteration into an $\ell_1$-ball by adopting the soft threshold function. However, because our {\bf Definition \ref{Def.ASOT}} requires a simplex mapping function $P$, we shrink the solution domain from an $\ell_1$-ball to a unit simplex $\Delta^u_k$ to remove unexpected solutions for faster computation. This process leads to the following problem of
\begin{equation}
	\label{Eq.new_lasso}
	\mathop{\rm minimize}_{{\bm{z}\in\mathbb{R}^k_+}}~\frac12\|\mat{W}\vec{z} - \vec{x}\|_2^2 + \lambda(\vec{z}^T \vec{1}_k - 1),
\end{equation}
where $\lambda \in \mathbb{R}_{++}$. This problem is solvable by the projected gradient descent with the orthogonal projection operator as
\begin{equation}
	\label{Eq.sp_layer}
	\vec{z}  \leftarrow  S_{\lambda}\left(\vec{z} - \mat{W}^T(\mat{W}\vec{z} - \vec{x})\right){,}
\end{equation}
where $S_{\theta}(\vec{a}): \mathbb{R}^k \to\mathbb{R}_{+}^k$ for a vector $\vec{a} \in \mathbb{R}^k$ is the orthogonal projection operator onto the unit simplex, which is defined as $S_{\theta}(\vec{a}):= {\rm max}(\vec{0}, \vec{a} - \theta)$. We present a derivation of this operator in the next paragraph. Figure \ref{Fig.DicLearn}(b) presents an overview of our framework. In that framework, Eq. (\ref{Eq.sp_layer}) is represented as a near-one-hot encoding layer. Our {near-one-hot encoder for $\widetilde{P}_o$} consists of $L$ sequentially connected near-one-hot encoding layers to generate $\vec{z}$, where $L$ denotes the number of layers. We also include a multilayer perception network (MLP) to evaluate $\lambda$ for each input as deep K-SVD does.

\paratitle{Derivation of Eq. (\ref{Eq.sp_layer})}. {Recall the primal problem that is expressed as
\begin{eqnarray}
\mathop{\rm minimize}_{\bm{z}\in\mathbb{R}^k_+,\langle\bm{z},\bm{1}_k\rangle \leq 1} \left\{f(\vec{z}):=\frac12\|\mat{W}\vec{z} - \vec{x}\|_2^2 \right\}\label{Eq.primal},
\end{eqnarray}
where we want to optimize a \vec{z} on the unit simplex $\Delta^u_k$ that minimizes the squared reconstruction error. We first ignore the non-negative constraint $\vec{z}\in\mathbb{R}^k_+$, and we compute the Lagrangian of $f(\vec{z})$ in Eq.~(\ref{Eq.primal}) as follows.
$
L(\vec{z},\lambda) := \frac12\|\mat{W}\vec{z} - \vec{x}\|_2^2 + \lambda(\langle\vec{z},\vec{1}_k\rangle - 1).
$
Assuming that the value of $\lambda > 0$ is already known, the minimization of the Lagrangian with respect to $\vec{z}\in\mathbb{R}^k_+$ can be expressed as follows.
\begin{eqnarray}
\mathop{\rm minimize}_{\bm{z}\in\mathbb{R}^k_+}~\frac12\|\mat{W}\vec{z} - \vec{x}\|_2^2 + \langle\vec{z},\lambda\vec{1}_k\rangle\label{Eq.lagrangian}.
\end{eqnarray}

Let 
$g(\vec{z}) =  \langle\vec{z},\lambda\vec{1}_k\rangle$, according to the proximal gradient descent method \cite{parikh2014proximal}, the update equation of $\vec{z}$ with a fixed step size as $1$ is expressed as follows.
\begin{eqnarray}
	\vec{z}^{(t+1)} &=& {\rm prox}_g(\vec{z}^{(t)} - \nabla f(\vec{z}^{(t)}))\cr
	&=&{\rm prox}_g(\vec{z}^{(t)} - \mat{W}^T(\mat{W}\vec{z}^{(t)} - \vec{x}))
	\label{Eq.prox},
\end{eqnarray}
where ${\rm prox}_g$ denotes the proximal operator as
\begin{eqnarray}
{\rm prox}_g(\vec{a}) &\!\!:=\!\!& \mathop{\rm arg~min}_{\bm{a}} g(\vec{a}) + \frac12\|\vec{z} - \vec{a}\|_2^2\cr
&\!\!=\!\!& \mathop{\rm arg~min}_{\bm{a}} \frac12\|\vec{z} - \vec{a}\|_2^2 + \langle\vec{a},\lambda\vec{1}_k\rangle\cr
&\!\!=\!\!& \mathop{\rm arg~min}_{\bm{a}} \sum_{i = 1}^k\left(\frac12(\vec{z}(i) - \vec{a}(i))^2 + \lambda\vec{a}(i)\right).\quad \quad\label{Eq.prox_subproblem}
\end{eqnarray}

Eq.~(\ref{Eq.prox_subproblem}) can be written as $k$ subproblems w.r.t. each element of $\vec{a}$, such as
$
{\rm prox}_g(\vec{a})(i) = \mathop{\rm arg~min}_{\bm{a}(i)}\frac12(\vec{z}(i) - \vec{a}(i))^2 + \lambda\vec{a}(i).
$
Because of the non-negative condition of $\vec{z}$ and Eq.~(\ref{Eq.prox}), we have $\forall {i \in\llbracket k\rrbracket}, \vec{a}(i)>0$. Therefore, the solutions of the above subproblems are $\vec{a}(i) = \max(0, \vec{z}(i) - \lambda)$. Generalizing these solutions to the vector $\vec{a}$ yields
\begin{eqnarray}
{\rm prox}_g(\vec{a}) = \max(\vec{0}, \vec{a} - \lambda\vec{1}_k)\label{Eq.proj_op},
\end{eqnarray}
which derives the equation of our orthogonal projection operator $S_\theta(\vec{a})$ of the paper. Substituting Eq.~(\ref{Eq.proj_op}) into Eq.~(\ref{Eq.prox}) yields Eq. (\ref{Eq.sp_layer}),
which completes the derivation.
}

\section{Numerical Evaluation}
\label{Sec.exp}

Numerical experiments are conducted through three aspects: (i) distance approximation evaluation, (ii) computational time evaluation, and (iii) ablation studies of the main parameter $k$. This section presents reports of our experimentation.

\begin{figure*}
\centering
\includegraphics[width= \hsize]{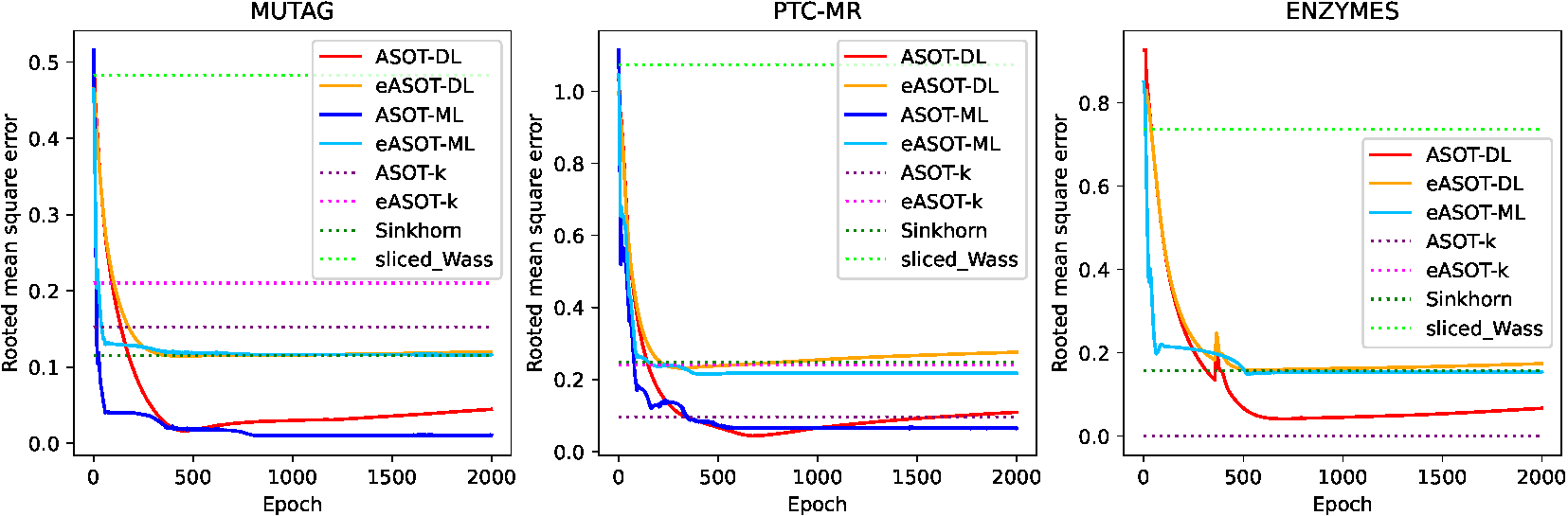}
\caption{Average approximation error curves for graph datasets across training epochs.}
\label{Fig.avg_approx}
\end{figure*}

\begin{figure}
\centering
	\includegraphics[width= 0.65\hsize]{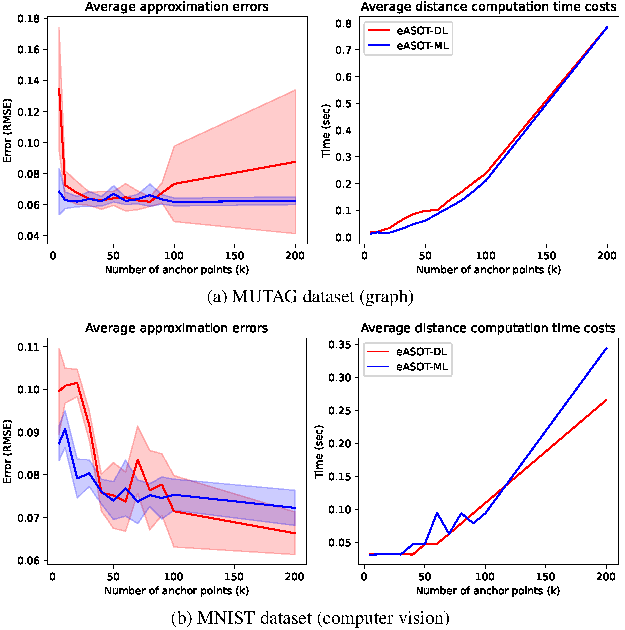}
	\label{Fig.ml_model}

	\caption{Average approximation error and computational time cost curves of an ablation study of parameter $k$, where this report describes the results of a series of $k$ of $\{5, 10, 20, 30, 40, 50, 60, 70, 80, 90, 100, 200\}$.}
	\label{Fig.ablation}
\end{figure}

\begin{table*}
\caption{Average approximation errors of graph datasets (RMSE).}
\label{Tab.err}
\begin{center}
\scalebox{0.7}{
\begin{tabular}{lcccccccc}
\toprule
Methods&MUTAG&PTC-MR&ENZYMES&BZR&COX2&NCI1&ZINC&MNIST\\
\midrule
SW (CPU)&0.4826&1.0729&0.3253&0.3305&0.3253&0.6061&0.5761&0.1084\cr
eOT (CPU)&0.1155&0.2489&0.1556&0.1366&0.0827&0.0791&0.0754&0.0981\cr
BDS-eOT (GPU)&0.1035&0.0675&0.1487&0.1359&0.0807&0.0771&0.0744&0.0964\cr
\midrule
ASOT-ML (CPU)&0.0264&0.0638&0.0021&0.0117&0.0087&0.0214&0.0297&0.0869\cr
ASOT-DL (CPU)&0.0262&0.1324&0.0854&0.0227&0.0277&0.0453&0.0980&0.0972\cr
ASOT-$k$ (CPU)&0.1169&0.1562&0.0195&0.0286&0.0282&0.0831&0.0460&0.0157\cr
{\bf eASOT-ML (GPU)}&0.1216&0.2243&0.1544&0.1363&0.0832&0.0840&0.0811&0.0778\cr
{\bf eASOT-DL (GPU)}&0.1173&0.2897&0.1710&0.1376&0.0941&0.0902&0.0889&0.0990\cr
{\bf eASOT-$k$ (GPU)}&0.1821&0.2358&0.1657&0.1476&0.0975&0.1243&0.0988&0.0802\cr
\bottomrule
\end{tabular}}
\end{center}

\caption{Average computational time cost of graph datasets (sec). For ASOT methods, the value at the left of ``+'' denotes the average training time, whereas the one on the right denotes the distance computing time.}
\label{Tab.time}
\begin{center}
\scalebox{0.7}{
\begin{tabular}{lcccccccc}
\toprule
Methods&MUTAG&PTC-MR&ENZYMES&BZR&COX2&NCI1&ZINC&MNIST\\
\midrule
OT-EMD (CPU)& 2.14&7.68&28.89&30.51&29.30&1701&2708&199.9\cr
SW (CPU)&2.59&13.45&23.53&91.45&62.05&4512&5190&25.92\cr
eOT (CPU)&40.73&123.1&392.2&173.1&242.1&18574&29090&572.5\cr
BDS-eOT (GPU)&6.78&26.34&85.36&42.48&56.42&3970&5703&131\cr
\midrule
ASOT-ML (CPU)&1.84+2.00&1.92+6.22&12.8+27.77&7.38+13.53&12.27+20.50&64.6+1290&378+1509&0.41+126\cr
ASOT-DL (CPU)&5.76+2.06&4.70+9.95&4.76+25.27&5.74+16.76&5.88+27.35&8.04+1470&285+2151&0.82+9.59\cr
ASOT-$k$ (CPU)&0.1+1.97&0.09+6.68&0.15+20.92&0.17+9.51&0.20+13.00&0.73+994&37+2046&0.008+52.50\cr
{\bf eASOT-ML (GPU)}&1.84+0.01&1.92+0.03&12.8+0.18&7.38+0.10&12.27+0.17&64.6+6.92&378+6.95&0.41+2.47\cr
{\bf eASOT-DL (GPU)}&5.76+0.04&4.70+0.07&4.76+0.24&5.74+0.17&5.88+0.32&8.04+9.88&285+8.91&0.82+2.97\cr
{\bf eASOT-$k$ (GPU)}&{\bf 0.1+0.06}&{\bf 0.09+0.07}&{\bf 0.15+0.25}&{\bf 0.17+0.18}&{\bf 0.20+0.26}&{\bf 0.73+7.42}&{\bf 37+9.16}&{\bf 0.008+2.42}\cr
\bottomrule
\end{tabular}}
\end{center}
\end{table*}
\subsection{Experiment setup}
\label{Sec.setup}

\paratitle{Datasets.} This report describes our experiments conducted using seven real-world graph datasets: MUTAG \cite{debnath1991structure}, PTC-MR \cite{helma2001predictive}, ENZYMES \cite{schomburg2004brenda}, BZR, COX2 \cite{sutherland2003spline}, NCI1\cite{wale2008comparison}, and ZINC \cite{bresson2019two} from the TUdataset \cite{Morris+2020}. The latter, ZINC, is a large-scale dataset containing $249456$ graphs, which is much larger than the other graph datasets used for our experimentation. Therefore, our experiment setup for ZINC is slightly different from the others. We describe the experiment setups in the ``Baselines and other setups'' paragraph. Furthermore, we report experiments on a computer vision dataset: MNIST \cite{lecun1998mnist}. The MNIST database includes two splits: train and test subsets, where the {training} subset has $60000$ images of handwritten digits from $0$ to $9$, whereas the test subset has $10000$ images. Each image is resized into a $28 \times 28$ matrix.

\paratitle{Preprocessing on graph datasets.} Because the node features of the datasets are one-hot vectors of discrete labels, which are too simple for evaluating approximation performance, we conduct a feature updating preprocessing before training. Thereby, we can generate more complicated features. More concretely, we conduct a classic message-passing-based updating equation as follows to update node features: $\vec{x}_i^{\prime} := (1+\epsilon) \cdot \vec{x}_i + \sum_{j\in\mathcal{N}(i)} \vec{x}_j$, where $\mathcal{N}(i)$ denotes the set of indices of adjacent node of {the} $i$-th node, $\vec{x}_i\in\mathbb{R}^d$ denotes the feature of {the} $i$-th node, and $\vec{x}_i^\prime\in\mathbb{R}^d$ denotes its updated feature. This is exactly the updating equation of the famous graph isomorphism network \cite{xu2018powerful} without the MLP layer. Specifically, we set $\epsilon = 0$. 
For all data, we compute four updates of equations iteratively. The final feature for each node is the concatenation of its original feature and outputs of each iteration as $[\vec{x}_i, \vec{x}_i^{\prime}, \vec{x}_i^{\prime\prime}, \vec{x}_i^{\prime\prime\prime}, \vec{x}_i^{\prime\prime\prime\prime}]\in\mathbb{R}^{5d}$.

\paratitle{Preprocessing on MNIST dataset.} We follow the experiments in Dvurechensky's work \cite{dvurechensky2018computational} to solve the OT problem on {the} MNIST dataset, where they treat images of handwritten digits as one-dimensional discrete distributions and compute their {Wasserstein} distance. However, the discrete distributions used in their experiments are the same size (equal to the number of pixels of a single image). To demonstrate the performance of our proposed method on size-variable distributions, we transform images into distributions of different sizes by filtering out their zero elements. During preprocessing, we only sample the pixels with non-zero values from the image. More concretely, assume ${i,j \in \llbracket 28\rrbracket}$. Let ${p(i,j)\in\llbracket 256\rrbracket}$ be the pixel value of $i$-th row and $j$-th column. Image data are converted to a discrete distribution with three-dimensional sampling points as $\{[\frac{i}{28},\frac{j}{28}, \frac{p(i,j)}{256}]\in\mathbb{R}^3:{i,j\in\llbracket 28\rrbracket}, p(i,j)>0\}$. Because we only sample pixels with values larger than $0$, the sizes of these distributions vary.

\paratitle{Baselines and other setups.} We set the Earth Movers Distance (OT-EMD) solved by the EMD solver \cite{bonneel2011displacement} as the baseline distance, which computes an exact solution of the OT problem in Eq. (\ref{Eq.OT}) in the form of a linear programming problem. For our proposed methods, in addition to ASOT-ML and ASOT-DL, we also implement a variant ASOT-$k$, for which the anchor space is learned by $k$-means according to {\bf Proposition \ref{Prop.UBoh}}. eASOT-ML, eASOT-DL, and eASOT-$k$ denote the entropic ASOT solved using Sinkhorn's solver with GPU-parallelization, respectively for ASOT-ML, ASOT-DL and the $k$-means variant. ASOT-ML, ASOT-DL, and ASOT-$k$ denote the ones with the EMD solver. For other compared approximate methods, we choose the following: The entropic OT with the {Sinkhorn's algorithm} \cite{Peyre_2019_OTBook_s} on {CPU (eOT), the entropic OT with the {Sinkhorn's algorithm} and the block diagonal stacking on GPU (BDS-eOT)}, and the sliced Wasserstein distance (SW) \cite{bonneel2015sliced}. For the parameter settings of other methods, we use Sinkhorn's solver with $\varepsilon = 0.1$ and $50$ iterations, a sliced Wasserstein distance solver with $d$ projections. For our methods, we set the number of anchor points $k$ equal to the average number of nodes of the dataset contained by a graph. For the weights of losses of ASOT-DL, we set $\alpha = 1, \beta=1$, and $\gamma=1$. For ASOT-ML, we set $C=100$. We conduct $500$ epochs of training with a batch size of $500$ (graphs), except for ZINC, for which we only compute 50 epochs. For the subset division of graph datasets, we run a $10$-fold cross-validation where train vs. test is $9:1$, except for ZINC, where we use the default splits. We compute the pairwise distance of the whole dataset (train + test) for evaluation, except for ZINC, for which we only compute the test subset. For the subset division of the MNIST dataset, we randomly choose $50$ images for each category of digits from the test subset, with which we construct a subset of $500$ images. We conduct $10$ times distance approximation experiments on these subsets, which are generated randomly with different random seeds for each run. The experimental setup used a machine equipped with an Intel i7-12700KF CPU, 32 GB of RAM, and an Nvidia GeForce RTX 3090Ti GPU.

\subsection{Wasserstein distance approximation}

\paratitle{Distance approximation evaluation.} For these experiments, we simulate a general task of pairwise Wasserstein distance matrix computation, where the distance between two graphs (images) is defined as the $1$-Wasserstein distance with Euclidean ground metric between their node feature sets {(pixel feature sets)}. For the evaluation criterion, we adopt the rooted mean square error (RMSE) between approximated distance matrices and the ground truth computed using EMD. Figures \ref{Fig.avg_approx} show the curves of approximation errors of three graph datasets, where it is apparent that the approximation errors decrease as we minimize the losses of ASOT-ML and ASOT-DL. TABLE \ref{Tab.err} presents the average approximation errors. We omit the standard deviations here because all of them are less than $0.05$. In this table, we compare the ASOT problems solved by the EMD solver (ASOT-ML, ASOT-DL, ASOT-$k$) with the baseline method OT-EMD and the entropic ASOT problem solved using the Sinkhorn solver (eASOT -ML, eASOT-DL, eASOT-$k$) versus eOT and BDS-eOT because they are solved respectively using the same solver. In summary, the approximation results of ASOT-ML, ASOT-DL and ASOT-$k$ are very close to the baseline method. The approximation errors of eASOT-ML, eASOT-DL, and eASOT-$k$ are also very close to those of eOT and BDS-eOT. Among them, ASOT-ML and eASOT-ML exhibit the best approximation performance.

\paratitle{Computational time cost evaluation.} We also evaluate the distance computation time cost in the distance approximation experiments. TABLE \ref{Tab.time} presents records of the time costs (divided into model training time and distance computing time) in seconds, where the smallest value is shown with bold typeface. For CPU-based methods, our proposed methods using the EMD solver (ASOT-ML, ASOT-DL, ASOT-$k$) show better distance computation in most datasets compared to baseline methods (OT-EMD) in time. For large-scale datasets such as NCI1, ZINC, and MNIST, the CPU-based ASOTs also show great time reductions even if the model-training time is added to the distance-computing time. For GPU-based methods, our proposed methods using Sinkhorn's solver (eASOT-ML, eASOT-DL, eASOT-$k$) show remarkable time reduction (approximately 50-500 times faster distance computation) compared to the BDS-eOT. In summary, the $k$-means variants (ASOT-$k$, eASOT-$k$) show the best cost performance because their training time and distance computing time are both very fast. Although ASOT-ML and eASOT-ML have the best approximation performance, their training time is the slowest among our proposed methods because they require calculating the cost matrix in the training subset.  

\subsection{Ablation studies}
As described earlier, parameter $k$ is considered the most important parameter of our proposed ASOT. It represents the number of learnable anchor points. To study how the approximation error deteriorates and computational time decreases with different $k$, this report describes two ablation study experiments using two datasets: the graph dataset MUTAG and the computer vision dataset MNIST. We prepare a candidate set of $k$ as ${\{5, 10, 20, 30, 40, 50, 60, 70, 80, 90, 100, 200\}}$. For each dataset, we conduct {five} runs of a single experiment and then report the means and standard deviations of the approximation error. We keep the experiment setups the same as described in Section \ref{Sec.setup}, except for the number of the Sinkhorn iterations, which we set as $200$ for MUTAG and as $500$ for MNIST. Figure \ref{Fig.ablation} show{s} the experimentally obtained results of approximation error and computational time with different $k$. From the curves, one can observe a stable descent of the approximation error curves of eASOT-ML on both datasets, whereas that of eASOT-DL shows signs of rebound on the MUTAG dataset when $k$ is larger than 90. We believe that this rebound occurs because {eASOT-DL optimizes a relaxed solution of the upper bound in {\bf Proposition \ref{Prop.UBoh}}}, which makes it easier to reach the limits and therefore overfit.

\section{Conclusion}
{This paper proposes {an approximate} OT problem, named the ASOT problem, over the anchor space for multiple OT problems. For the proposed ASOT, we restrict the mass transport to a learned and fixed anchor space, which is a subset of the original feature space.} Then, we prove the upper bound of the Wasserstein distance approximation error of such an ASOT problem. Based on these findings, we propose metric learning (ASOT-ML), a $k$-means-based (ASOT-$k$), and a deep dictionary learning framework (ASOT-DL) for anchor space learning. Experimentally obtained results demonstrate that our methods decrease computational time significantly while maintaining a reasonable level of approximation performance. For future work, we aim to design closed-form expressions for the mapping and metric function of ASOT, thereby eliminating the requirement for anchor space learning.

\bibliographystyle{unsrt}
\bibliography{nips, graph, OT, graph_datasets, kasailab}
	
\end{document}